\newif\ifdraft
\newif\ifextendedversion
  \newcommand{\nb}[1]{\textcolor{red}{\bf!}%
  \marginpar[\parbox{15mm}{\raggedleft\scriptsize\textcolor{red}{#1}}]%
   {\parbox{15mm}{\raggedright\scriptsize\textcolor{red}{#1}}}}
  \newcommand{\nb}[1]{}
\newtheorem{theorem}{Theorem}
\newtheorem{lemma}[theorem]{Lemma}
\newtheorem{definition}[theorem]{Definition}
\theoremstyle{definition}
\newtheorem{example}[theorem]{Example}
\newtheorem{bigexample}[theorem]{Example}
\newcommand{\qedboxfull}{\vrule height 4pt width 4pt depth 0pt}
\newcommand{\qedfull}{\hfill{\qedboxfull}}
\newcommand{\nobrackettag}[0]{\def\tagform@##1{\maketag@@@{##1}}}
\newcommand{\citeasnoun}[1]{\citeauthor{#1}~\shortcite{#1}}
\newcommand{\A}{\mathcal{A}}
\newcommand{\C}{\mathcal{C}}
\newcommand{\D}{\mathcal{D}}
\newcommand{\E}{\mathcal{E}}
\newcommand{\I}{\mathcal{I}}
\newcommand{\J}{\mathcal{J}}
\renewcommand{\L}{\mathcal{L}}
\newcommand{\M}{\mathcal{M}}
\renewcommand{\O}{\mathcal{O}}
\renewcommand{\P}{\mathcal{P}}
\newcommand{\R}{\mathcal{R}}
\renewcommand{\S}{\mathcal{S}}
\newcommand{\T}{\mathcal{T}}
\newcommand{\cl}{\mathcal}
\newcommand{\tup}[1]{\langle #1\rangle}
\newcommand{\per}{\mbox{\bf .}}
\newcommand{\eg}{e.g.}
\newcommand{\ie}{i.e.}
\newcommand{\ISA}{\sqsubseteq}
\newcommand{\EQU}{\equiv}
\newcommand{\AND}{\sqcap}
\newcommand{\ATMOST}[3]{\ensuremath{\mathop{\leq \! #1} #2 \per #3}}
\newcommand{\SOMET}[1]{\ensuremath{\exists #1}}
\newcommand{\SOME}[2]{\ensuremath{\exists #1 \per #2}}
\newcommand{\ALL}[2]{\ensuremath{\forall{#1}\per{#2}}}
\newcommand{\INV}[1]{#1^-}
\newcommand{\sql}{\mathit{sql}}
\newcommand{\msf}[1]{\mathsf{#1}}
\newcommand{\indivnames}{\msf{N}_{\msf{I}}} 
\newcommand{\conceptnames}{\msf{N}_{\msf{C}}} 
\newcommand{\rolenames}{\msf{N}_{\msf{R}}}
\newcommand{\relationnames}{\msf{N}_{\msf{S}}}
\newcommand{\ind}{\mathsf{Ind}}
\newcommand{\sig}{\mathsf{sig}}
\newcommand{\normexists}{\ensuremath{\mathsf{norm}_{\exists}}}
\newcommand{\normand}{\ensuremath{\mathsf{norm}_{\AND}}}
\newcommand{\etm}{\mathsf{etm}}
\newcommand{\rew}{\mathsf{rew}}
\newcommand{\compile}{\mathsf{comp}}
\newcommand{\rewcomp}{\ensuremath{\mathsf{RewObda}}}
\newcommand{\cut}{\mathsf{cut}}
\newcommand{\adom}{\top\!\!_{\Delta}}
\newcommand{\dom}[1][\I]{\Delta^{#1}}
\newcommand{\Int}[2][\I]{#2^{#1}}
\newcommand{\cert}{\mathit{cert}}
\newcommand{\ans}{\mathit{ans}}
\newcommand{\dlliter}{\textit{DL-Lite}\ensuremath{_{\R}}\xspace}
\newcommand{\el}{\ensuremath{\E\L}\xspace}
\newcommand{\hshiq}{\text{Horn-}\-\ensuremath{\mathcal{SHIQ}}\xspace}
\newcommand{\hshi}{\text{Horn-}\-\ensuremath{\mathcal{SHI}}\xspace}
\newcommand{\shiq}{\ensuremath{\mathcal{SHIQ}}\xspace}
\newcommand{\shi}{\ensuremath{\mathcal{SHI}}\xspace}
\newcommand{\halchiq}{\text{Horn-}\ensuremath{\cl{ALCHIQ}}\xspace}
\newcommand{\halcif}{\text{Horn-}\ensuremath{\cl{ALCIF}}\xspace}
\newcommand{\halcf}{\text{Horn-}\ensuremath{\cl{ALCF}}\xspace}
\newcommand{\owltwo}{OWL\,2\xspace}
\newcommand{\owlql}{OWL\,2\,QL\xspace}
\newcommand{\system}[1]{{\small\textsc{#1}}}
\newcommand{\ontop}{\system{Ontop}\xspace}
\newcommand{\ontoprox}{\system{OntoProx}\xspace}
\newcommand{\clipper}{\system{Clipper}\xspace}
\newcommand{\cq}{CQ\xspace}
\newcommand{\person}{\mathsf{Person}}
\newcommand{\caccount}{\mathsf{CAcc}}
\newcommand{\saccount}{\mathsf{SAcc}}
\newcommand{\name}{\mathsf{inNameOf}}
\newcommand{\entitytab}{\mathsf{ENT}}
\newcommand{\accounttab}{\mathsf{PROD}}
\newcommand{\personview}{V_{\mathsf{Person}}}
\newcommand{\nameview}{V_{\mathsf{inNameOf}}}
\newcommand{\caccountview}{V_{\mathsf{CAcc}}}
\begin{document}
\ifextendedversion
\title{Beyond \owlql in OBDA: Rewritings and Approximations (Extended Version)}
\else
\title{Beyond \owlql in OBDA: Rewritings and Approximations}
\fi

\author{%
  Elena Botoeva$^{1}$, Diego Calvanese$^{1}$, Valerio Santarelli$^{2}$,
  Domenico F.~Savo$^{2}$,\\[1mm]
  \Large\bf Alessandro Solimando$^{3}$, \and Guohui Xiao$^{1}$\\[2mm]
  $^{1}$ KRDB Research Centre for Knowledge and Data, Free University of
  Bozen-Bolzano, Italy,
  \textit{lastname}\texttt{@inf.unibz.it}\\
  $^{2}$ Dip.\ di Ing.\ Informatica Automatica e Gestionale, Sapienza
  Universit\`a di Roma, Italy,
  \textit{lastname}\texttt{@dis.uniroma1.it}\\
  $^{3}$  DIBRIS, University of Genova, Italy,
  \texttt{alessandro.solimando@unige.it} }

\maketitle

\begin{abstract}
  Ontology-based data access (OBDA) is a novel paradigm facilitating access to
  relational data, realized by linking data sources to an ontology by means of
  declarative mappings.  
\dlliter, which is the logic
  underpinning the W3C ontology language OWL~2 QL and the current language
  of choice for OBDA, has been designed with the goal of
  delegating query answering to the underlying database engine, and
  thus is restricted in expressive power.
  E.g., it does
  not allow one to express disjunctive information, and any form of recursion
  on the data.
  The aim of this paper is to overcome these limitations of \dlliter, and
  extend OBDA to more expressive ontology languages,
  while still leveraging the underlying relational technology for query
  answering.
  We achieve this by relying on two well-known mechanisms, namely conservative
  rewriting and approximation, but significantly extend their practical impact
  by bringing into the picture the mapping, an essential component of OBDA.
  Specifically, we develop techniques to rewrite OBDA specifications with an
  expressive ontology to ``equivalent'' ones with a \dlliter ontology, if
  possible, and to approximate them otherwise.  We do so by exploiting the high
  expressive power of the mapping layer to capture part of the domain semantics
  of rich ontology languages.
  We have implemented our techniques in the prototype system \ontoprox, making
  use of the state-of-the-art OBDA system \ontop and the query answering system
  \clipper, and we have shown their feasibility and effectiveness with
  experiments on synthetic and real-world data.
\end{abstract}

\section{Introduction}
\label{sec:introduction}

Ontology-Based Data Access (OBDA) is a popular pa\-ra\-digm that enables end users
to access data sources through an ontology, abstracting away low-level details
of the data sources themselves.  The ontology provides a high-level description
of the domain of interest, and is semantically lin\-ked to the data sources by
means of a set of mapping assertions \cite{CDLL*09,GSVW*15}.
Typically, the data sources are represented as relational data, the ontology is
constituted by a set of logical axioms over concepts and roles, and each
mapping assertion relates an SQL query over the database to a concept or role
of the ontology.

As an example, consider a bank domain, where we can specify that a checking
account in the name of a person is a simple account by means of the axiom
(expressed in description logic notation)
$
  \caccount \AND \SOME{\name}{\person} \ISA \saccount.
$
We assume that the information about the accounts and their owners is stored in
a database $\D$, and that the \mbox{ontology terms} $\caccount$, $\name$, and
$\person$ are connected to $\D$ respectively via the mapping assertions
$\sql_1(x) \rightsquigarrow \caccount(x)$,
$\sql_2(x,y) \rightsquigarrow \name(x,y)$ and
$\sql_3(x) \rightsquigarrow \person(x)$, where each $\sql_i$ is a (possibly
very complex) SQL query over $\D$.  Suppose now that the user intends to
extract all simple accounts from $\D$.  Formulating
such a query directly over $\D$ would require to know precisely how $\D$ is
structured, and thus could be complicated.  Instead, exploiting OBDA, the user
can simply query the ontology with $q(x)=\saccount(x)$, and rely on the OBDA
system to get the answers.

Making OBDA work efficiently over large amounts of data,
requires that que\-ry answering over the ontology is \emph{first-order
 (FO)-rew\-ri\-table}\footnote{Recall that FO queries constitute the core of
 SQL.} \cite{CDLLR07,ACKZ09}, which in turn limits the expressiveness of the
ontology language, and the degree of detail with which the domain of interest
can be captured.
The current language of choice for OBDA is \dlliter, the logic underlying
\owlql \cite{W3Crec-OWL-Profiles},
which has been specifically designed to ensure FO-rewritability of query
answering.  Hence, it does not allow one to express disjunctive information, or
any form of recursion on the data (\eg, as resulting from qualified
existentials on the left-hand side of concept inclusions), since using such
constructs in general cau\-ses the loss of FO-rewritability \cite{CDLLR13}.
For this reason, in many situations the expressive power of \dlliter is too
restricted to capture real-world scenarios; \eg, the axiom in our example is
not expressible in \mbox{\dlliter}.

The aim of this work is to overcome these limitations of \dlliter by allowing
the use of additional constructs in the ontology.  To be able to exploit the
added value coming from OBDA in real-world settings, an important requirement
is the efficiency of query answering, achieved through a rewriting-based
approach.  This is only possible for ontology languages that are FO-rewritable.
Two general mechanisms that have been proposed to cope with computational
complexity coming from high expressiveness of ontology languages, and that
allow one to regain FO-rewritability, are conservative rewriting \cite{LuPW11}
and approximation \cite{RePZ10b,CMRSS14}.  Given an ontology in a powerful
language, in the former approach it is rewritten, when possible, into an
equivalent one in a restricted language, while in the latter it is
approximated, thus losing part of its semantics.

In this work, we significantly extend the practical impact of both approaches
by bringing into the picture
\emph{the mapping}, an essential component of OBDA that has been ignored so
far. Indeed, it is
a fairly expressive component of an OBDA system, since it allows one to make
use of arbitrary SQL (hence FO) queries to relate the content of the data
source to the elements of the ontology.  Hence, a natural question is how one
can use the mapping component to capture as much as possible additional domain
semantics, resulting in better approximations or more cases where conservative
rewritings are possible, while maintaining a \dlliter ontology.

We illustrate how this can be done on our running examp\-le, where the
non-\dlliter axiom can be encoded by
add\-ing the assertion
$\sql_1(x)\bowtie\sql_2(x,y)\bowtie\sql_3(y) \rightsquigarrow \saccount(x)$ to
the mapping.  This assertion connects $\D$ directly to the ontology term
$\saccount$ by making use of a join of the SQL queries in the original mapping.
We observe that the resulting mapping, together with the ontology in which the
non-\dlliter axiom has been removed, constitutes a conservative rewriting of
the original OBDA specification.

\begin{example}\label{ex:running}
  Consider the following \halchiq TBox $\T^b = \{ ~ \caccount \AND
  \SOME{\name}{\person} \ISA \saccount ~ \}$
  %
  %
  specifying that 
  a checking account in the name of a person is a simple account.
  \qedfull
\end{example}

In this paper, we elaborate on this idea, by introducing a novel
\emph{framework for rewriting and approximation of OBDA specifications}.
Specifically, we provide a notion of \emph{rewriting} based on \emph{query
 inseparability} of OBDA specifications \cite{BiRo15}.  To deal with those
cases where it is not possible to rewrite the OBDA specification into a query
in\-se\-pa\-rab\-le one whose ontology is in \dlliter, we give a notion of
\emph{approximation} that is sound for query answering.
%
We develop techniques for rewriting and approximation of OBDA specifications
based on compiling the extra expressiveness into the mappings.  We target
rather expressive ontology languages, and for \halchiq, a Horn fragment of
\owltwo, we study decidability of existence of OBDA rewritings, and techniques
to compute them when they exist, and to approximate them, otherwise.

We have implemented our techniques in a prototype system called \ontoprox,
which exploits functionalities provided by the \ontop \cite{RoKZ13} and
\clipper systems \cite{EOSTX12} to rewrite or approximate an OBDA specification
expressed in \hshiq to one that can be directly processed by any OBDA system.
We have evaluated \ontoprox over synthetic and real OBDA instances against
\begin{inparaenum}[\it (i)]
\item the default \ontop behavior,
\item local semantic approximation (LSA),
\item global semantic approximation (GSA), and
\item \clipper over materialized ABoxes.
\end{inparaenum}
We observe that using \ontoprox, for a few queries we have been able to obtain
more answers (in fact, complete answers, as confirmed by \clipper).
However, for many queries \ontoprox showed no difference with respect to the
default \ontop behavior.  One reason for this is that in the considered
real-world scenario, the mapping designers put significant effort to manually
create complex mappings that overcome the limitations of \dlliter.  Essentially
they followed the principle of the technique presented here, and therefore
produced an OBDA specification that was already ``complete'' by design.

The observations above immediately suggest a significant practical value of our
approach, which can be used to facili\-tate the design of new OBDA
specifications for existing expressive ontologies: instead of a manual
compilation, which is cumbersome, error-prone, and difficult to maintain,
mapping designers can write straightforward mappings, and the re\-sulting
OBDA specification can then be automatically trans\-formed into a \dlliter
OBDA specification with rich mappings.

The paper is structured as follows. In Section~\ref{sec:preliminaries}, we
provide some preliminary notions, and in Section~\ref{sec:framework}, we
present our framework of OBDA rewriting and approximation. In
Section~\ref{sec:algorithms}, we illustrate a technique for computing the
OBDA-rewriting of a given \halchiq specification.  In
Section~\ref{sec:approximation}, we address the problem of OBDA-rewritability,
and show how to obtain an approximation when a rewriting does not exist.  In
Section~\ref{sec:experiments}, we discuss our prototype \ontoprox and
experiments.  Finally, in Section~\ref{sec:conclusions}, we conclude the paper.
\ifextendedversion The omitted proofs can be found in the appendix.
\else
Omitted proofs can be found in the extended version of this paper
\cite{BCSSSX15}.
\fi

\section{Preliminaries}
\label{sec:preliminaries}


We give some basic notions about ontologies and OBDA.

\subsection{Ontologies}

We assume to have the following pairwise disjoint countably infinite alphabets:
$\conceptnames$ of \emph{concept names}, 
$\rolenames$ of \emph{role names}, 
and $\indivnames$ of constants (also called \emph{individuals}).
We consider ontologies expressed in Description Logics (DLs). 
Here we present the logics \halchiq, the Horn fragment of \shiq without role
transitivity, and \dlliter, for which we develop some of the technical results
in the paper.  However, the general approximation framework is applicable to
any fragment of \owltwo.

A \halchiq TBox in normal form is a finite set of axioms: \emph{concept
 inclusions (CIs)} $\bigsqcap_i A_i \ISA C$, \emph{role inclusions (RIs)}
$R_1\ISA R_2$ and \emph{role disjointness} axioms $R_1 \AND R_2 \ISA \bot$,
where $A$, $A_i$ denote concept names, $R$, $R_1$, $R_2$ denote role names $P$
or their inverses $P^-$, and $C$ denotes a concept of the form $\bot$, $A$,
$\SOME{R}{A}$, $\ALL{R}{A}$, or $\ATMOST{1}{R}{A}$ \cite{Kaza09}.
For an inverse role $R=P^-$, we use $R^-$ to denote $P$.  $\bot$ denotes the
empty concept/role.
A \dlliter TBox is a finite set of axioms of the form $B_1\ISA B_2$,
$B_1\AND B_2\ISA\bot$, $R_1\ISA R_2$, and $R_1\AND R_2\ISA\bot$, where $B_i$
denotes a concept of the form $A$ or $\SOME{R}{\top}$.  In what follows, for
simplicity we write $\SOMET{R}$ instead of $\SOME{R}{\top}$, and we use $N$ to
denote either a concept or a role name.  We also assume that all TBoxes are in
normal form.

An \emph{ABox} is a finite set of \emph{membership assertions} of the form
$A(c)$ or $P(c,c')$, where $c,c'\in\indivnames$.
For a DL $\L$, an \emph{$\L$-ontology} is a pair $\O=\tup{\T,\A}$, where $\T$ is
an $\L$-TBox and $\A$ is an ABox.
A \emph{signature} $\Sigma$ is a finite set of concept and role names. An
ontology $\O$ is said to be defined over (or simply, \emph{over}) $\Sigma$ if
all the concept and role names occurring in it belong to $\Sigma$ (and likewise
for TBoxes, ABoxes, concept inclusions, etc.).  When $\T$ is over $\Sigma$, we
denote by $\sig(\T)$ the subset of $\Sigma$ actually occurring in $\T$.
Moreover we denote with $\ind(\A)$, the set of individuals appearing in $\A$.

The semantics, models, and the notions of satisfaction and consistency of
ontologies are defined in the standard way.  We only point out that we adopt
the \textit{Unique Name Assumption} (UNA), and for simplicity we also assume to
have \emph{standard names}, \ie, for every interpretation $\I$ and every
constant $c\in\indivnames$ interpreted by $\I$, we have that $\Int{c}=c$.

\begin{example}\label{ex:structural}
  Let $\T^b$ be the TBox of Example~\ref{ex:running}. Then $\T^b_1 =
  \mathsf{norm}(\T^b) = \{~ \person \ISA \ALL{\INV{\name}}{A_1}, \caccount \AND
  A_1 \ISA \saccount ~\}$.
  \qedfull
  %
\end{example}


%

\subsection{OBDA and Mappings}

Let $\S$ be a relational schema over a countably infinite set $\relationnames$
of database predicates.  For simplicity, we assume to deal with plain
relational schemas without constraints, and with database instances that
directly store abstract objects (as opposed to values). In other words, a
database instance $\D$ of $\S$ is a set of ground atoms over the predicates
in~$\relationnames$ and the constants in $\indivnames$.\footnote{All our
 results easily extend to the case where objects are constructed from retrieved
 database values \cite{CDLL*09}.}  Queries over $\S$ are expressed in SQL.  We
use $\varphi(\vec{x})$ to denote that query $\varphi$ has
$\vec{x}=x_1,\ldots,x_n$ as \emph{free} (\ie, answer) variables, where $n$ is
the arity of $\varphi$.  Given a database instance $\D$ of $\S$ and a query
$\varphi$ over $\S$, $\ans(\varphi,\D)$ denotes the set of tuples of constants
in $\indivnames$ computed by evaluating $\varphi$ over $\D$.

In OBDA, one provides access to an (external) database through an ontology
TBox, which is connected to the database by means of a mapping. Given a source
schema $\S$ and a TBox $\T$, a (GAV) \emph{mapping assertion} between $\S$ and
$\T$
has the form $ \varphi(x)\leadsto A(x) \text{ or } \varphi'(x,x')\leadsto
P(x,x'), $ where $A$ and $P$ are respectively concept and role names, and
$\varphi(x)$, $\varphi'(x,x')$ are arbitrary (SQL) queries expressed over $\S$.
Intuitively, given a database instance $\D$ of $\S$ and a mapping assertion
$m=\varphi(x)\leadsto A(x)$, the instances of the concept~$A$ generated by $m$
from $\D$ is the set $\ans(\varphi,\D)$; similarly for a mapping assertion
$\varphi(x,x')\leadsto P(x,x')$.

An \emph{OBDA specification} is a triple $\P=\tup{\T,\M,\S}$, where $\T$ is a
DL TBox, $\S$ is a relational schema, and $\M$ is a finite set of mapping
assertions. Without loss of generality, we assume that all concept and role
names appearing in $\M$ are contained in $\sig(\T)$.
An \emph{OBDA instance} is a pair $\tup{\P,\D}$, where $\P$ is an OBDA
specification, and $\D$ is a database instance of $\S$.
%
The semantics of the OBDA instance $\tup{\P,\D}$ is specified in terms of
interpretations of the concepts and roles in $\T$.  We define it by relying on
the following (\emph{virtual}\footnote{We call such an ABox `virtual', because
  we are not interested in actually materializing its facts.}) ABox
\[
  \A_{\M,\D}{\,=\,}\{N(\vec{o}) \mid \vec{o}\in \ans(\varphi,\D) \text{ and }
  \varphi(\vec{x})\leadsto N(\vec{x}) \text{ in\! } \M\}
\]
%
generated by $\M$ from $\D$, where $N$ is a concept or role name in $\T$.
Then, a model of $\tup{\P,\D}$ is simply a model of the ontology
$\tup{\T,\A_{\M,\D}}$.

Following \citeasnoun{DLLM*13}, we split each mapping assertion $m =
\varphi(\vec{x}) \leadsto N(\vec{x})$ in $\M$ into two parts by introducing an
intermediate view name $V_m$ for the SQL query $\varphi(\vec{x})$.  We obtain a
\emph{low-level} mapping assertion of the form $\varphi(\vec{x}) \leadsto
V_m(\vec{x})$, and a \emph{high-level} mapping assertion of the form
$V_m(\vec{x}) \leadsto N(\vec{x})$.
%
In our technical development, we deal only with the high-level mappings. Hence,
we abstract away the low-level mapping part, and in the following we directly
consider the intermediate views as our data sources.

\begin{example}\label{ex:obda}
We introduce the OBDA specification $\P^b=\tup{\T^b,\M^b,\S^b}$, where $\T^b$
is the TBox from Example~\ref{ex:running}, $\S^b$ is the following schema
{\footnotesize
  \[
  \entitytab(\texttt{ID},\texttt{TYPE},\texttt{EMPID}), \qquad
  \accounttab(\texttt{NUM},\texttt{TYPE},\texttt{CUSTID})
  \]
}
and $\M^b$ consists of the following three mapping assertions:
{\footnotesize
   \nobrackettag
  \[
  \begin{array}{@{}l}
    m_{\msf{P}}{:}\, \text{\scriptsize\texttt{SELECT\;ID\;AS\;X\;FROM\;ENT\;WHERE\;ENT.TYPE='P'}}
    \rightsquigarrow \person(\texttt{X})\\
    m_{\msf{N}}{:}\, \text{\scriptsize\texttt{SELECT\;NUM\;AS\;X,\,CUSTID\;AS\;Y\;FROM PROD}}
    \rightsquigarrow \name(\texttt{X},\texttt{Y})\\
    m_{\msf{C}}{:}\, \text{\scriptsize\texttt{SELECT\;NUM\;AS\;X\;FROM\;PROD\;P\;WHERE\;P.TYPE='B'}}
    \rightsquigarrow \caccount(\texttt{X})
  \end{array}
  \]}%
%
The corresponding high-level mapping $\M^b_H$ consists of assertions
$h_{\msf{P}}, h_{\msf{N}}, h_{\msf{C}}$, where:
%
  {\small
    \nobrackettag
    \[
    \begin{array}[b]{@{}l@{~}l}
      h_{\msf{P}}: & \personview(x) \rightsquigarrow  \person(x) \\
      h_{\msf{N}}: & \nameview(x,y) \rightsquigarrow  \name(x,y) \\
      h_{\msf{C}}: & \caccountview(x) \rightsquigarrow  \caccount(x)
    \end{array}
    \tag{\qedboxfull}
    \]}%
\end{example}

\subsection{Query Answering}


We consider conjunctive queries, which are the basic and most important
querying mechanism in relational database systems and ontologies.  A
\emph{conjunctive query} $(\cq)$ $q(\vec{x})$ over a signature $\Sigma$ is a
formula $\exists \vec{y}.\, \varphi(\vec x, \vec y)$, where $\varphi$ is  a
conjunction of atoms
$N(\vec{z})$, such that \mbox{$N$ is a concept or role} name in $\Sigma$, and
$\vec{z}$ are variables from $\vec{x}$ and $\vec{y}$.
The set of \emph{certain answers} to a CQ $q(\vec{x})$ over an ontology
$\tup{\T,\A}$, denoted $\cert(q,\tup{\T,\A})$, is the set of tuples $\vec{c}$
of elements from $\ind(\A)$ of the same length as $\vec{x}$, such that
$q(\vec{c})$ (considered as a FO sentence) holds in every model of
$\tup{\T,\A}$.
%
We 
mention two more query classes.  An \emph{atomic query} (AQ) is a CQ consisting
of exactly one atom whose variables are all free.
A \emph{CQ with inequalities (CQ$^{\neq}$)} is a CQ that may contain inequality
atoms between the variables of the predicate atoms.

Given a CQ $q$, an OBDA specification $\P=\tup{\T,\M,\S}$ and a database
instance $\D$ of $\S$, the answer to $q$ over the OBDA instance $\tup{\P,\D}$,
denoted $\cert(q,\P,\D)$, is defined as $\cert(q,\tup{\T,\A_{\M,\D}})$.
Observe that, when $\D$ is inconsistent with $\P$ (\ie, $\tup{\P,\D}$ does not
have a model), then $\cert(q,\P,\D)$ is the set of all possible tuples of
constants in $\A_{\M,\D}$ (of the same arity as $q$).


\section{An OBDA Rewriting Framework}
\label{sec:framework}

We extend the notion of query inseparability of ontologies \cite{BKRWZ14} to
OBDA specifications.  We adopt the proposal by \citeasnoun{BiRo15}, but we do
not enforce preservation of inconsistency.

\begin{definition}
  Let $\Sigma$ be a signature. Two OBDA specifications
  $\P_1 = \tup{\T_1, \M_1, \S}$ and $\P_2=\tup{\T_2,\M_2,\S}$ are
  $\Sigma$-\emph{CQ inseparable} if $\cert(q,\P_1,\D)=\cert(q,\P_2,\D)$, for
  every CQ $q$ over $\Sigma$ and every database instance $\D$ of $\S$.
\end{definition}

In OBDA, one must deal with the trade-off between the computational complexity
of query answering and the expressiveness of the ontology language.  Suppose
that for an OBDA specification $\P = \tup{\T,\M,\S}$, $\T$ is expressed in an
ontology language $\L$ that does not allow for efficient query answering.  A
possible solution
is to exploit the expressive power of the mapping layer to compute a new OBDA
specification $\P'=\tup{\T',\M',\S}$ in which $\T'$ is expressed in a language
$\L_t$ more suitable for query answering than $\L$. The aim is to encode in
$\M'$ not only $\M$ but also part of the semantics of $\T$, so that $\P'$ is
query-inseparable from $\P$. This leads to the notion of
rewriting of OBDA specifications.

\begin{definition}
  Let $\L_t$ be an ontology language.  The OBDA specification
  $\P'=\tup{\T',\M',\S}$ is a \emph{CQ-rewriting} in $\L_t$ of the OBDA
  specification $\P=\tup{\T,\M,\S}$ if
  \begin{inparaenum}[\it (i)]
  \item $\sig(\T) \subseteq \sig(\T')$,
  \item $\T'$ is an $\L_t$-TBox, and
  \item $\P$ and $\P'$ are $\Sigma$-CQ inseparable, for $\Sigma = \sig(\T)$.
  \end{inparaenum}
  If such $\P'$ exists, we say that $\P$ is \emph{CQ-rewritable into $\L_t$}.
\end{definition}


We observe that the new OBDA specification can be defined over a signature that
is an extension of that of the original TBox. This is specified by
condition~\textit{(i)}. In condition~\textit{(ii)}, we impose that the new
ontology is specified in the target language $\L_t$. Finally,
condition~\textit{(iii)} imposes that the OBDA specifications cannot be
distinguished by CQs over the original TBox.  Note that the definition allows
for changing the ontology and the mappings, but not the source schema,
accounting for the fact that the data sources might not be under the control of
the designer of the OBDA specification.

As expected, it is not always possible to obtain a CQ-rewriting of $\P$ in an
ontology language $\L_t$ that allows for efficient query answering.  Indeed,
the combined expressiveness of $\L_t$ with the new mappings might not be
sufficient to simulate query answering over $\P$ without loss.  In these cases,
we can resort to approximating query answers over $\P$ in a \emph{sound} way,
which means that the answers to queries posed over the new specification are
contained in those produced by querying $\P$.  Hence, we say that the OBDA
specification $\P' = \tup{\T',\M',\S}$ is a \emph{sound CQ-approximation} in
$\L_t$ of the OBDA specification $\P=\tup{\T, \M, \S}$ if $\P'$ satisfies
\textit{(i)}, \textit{(ii)}, and
$\cert(q,\P',\D) \subseteq \cert(q,\P,\D)$, for each CQ $q$ over $\sig(\T)$ and
for each instance $\D$ of $\S$.

Next, we study CQ-rewritability of OBDA specifications into \dlliter,
developing suitable techniques.

\section{Rewriting OBDA Specifications}
\label{sec:algorithms}

In this section, we develop our OBDA rewriting technique,
which relies on Datalog rewritings of the TBox (and mappings).
Recall that a \emph{Datalog program} (with inequalities) is a finite set of
definite \emph{Horn} clauses \emph{without} functions symbols, \ie, rules of
the form $\mathit{head}\leftarrow\varphi$, where $\varphi$ is a finite
non-empty list of predicate atoms and guarded inequalities called the body of
the rule, and $\mathit{head}$ is an atom, called the head of the rule, all of
whose variables occur in the body.  The predicates that occur in rule heads are
called \emph{intensional} (IDB), the other predicates are called
\emph{extensional} (EDB).

\subsection{ET-mappings}
\label{sec:extended-t-mappings}

Now, we extend the notion of T-mappings introduced by \citeasnoun{RoKZ13}, and
define the notion of an ET-mapping that results from compiling into the mapping
the expressiveness of ontology languages that are Datalog rewritable, as
introduced below.

%
We first introduce notation we need.
Let $\Pi$ be a Datalog program and $N$ an IDB predicate.  For a database $\D$
over the EDB predicates of $\Pi$, let $N_\Pi^i(\D)$ denote the set of facts
about $N$ that can be deduced from $\D$ by at most $i \geq 1$ applications of
the rules in $\Pi$, and let $N_\Pi^\infty(\D)=\bigcup_{i \geq 1} N_\Pi^i(\D)$.
It is known that the predicate $N_\Pi^\infty(\cdot)$ defined by $N$ in $\Pi$
can be characterized by a possibly infinite union of CQ$^{\neq}$s
\cite{CGKV88}, \ie, there exist CQ$^{\neq}$s $\varphi^N_0,\varphi^N_1,\ldots$
such that $N_\Pi^\infty(\D) = \bigcup_{i\geq 0} \{N(\vec{a}) \mid
\vec{a}\in\ans(\varphi^N_i,\D)\}$, for every~$\D$.  The $\varphi^N_i$'s are
called the \emph{expansions} of $N$ and can be described in terms of expansion
trees%
\ifextendedversion (see Appendix~\ref{sec:datalog-expansion}).  \else ;
cf. \cite[Appendix~A]{BCSSSX15}.  \fi We denote by $\Phi_\Pi(N)$ the set of
expansion trees for $N$ in $\Pi$, and abusing notation also the (possibly
infinite) union of CQ$^{\neq}$s corresponding to it.
Note that $\Phi_\Pi(N)$ might be infinite due to the presence of IDB predicates
that are \emph{recursive}, \ie, either directly or indirectly refer to
themselves.

We call a TBox $\T$ \emph{Datalog rewritable} if it admits a translation
$\Pi_{\T}$ to Datalog that preserves consistency and answers to AQs (see, \eg,
the translations by \citeasnoun{HuMS05}, \citeasnoun{EOSTX12}, and
\citeasnoun{TSCS15} for \hshiq, and by \citeasnoun{CMSH13} for \shi).  We
assume that $\Pi_{\T}$ makes use of a special nullary predicate $\bot$ that
encodes inconsistency, \ie, for an ABox $\A$, $\tup{\T,\A}$ is consistent iff
$\bot_{\Pi_{\T}}^\infty(\A)$ is empty.\footnote{Here we simply consider $\A$ 
as a database.}  We also assume that $\Pi_\T$ includes the following auxiliary
rules, which ensure that $\Pi_\T$ derives all possible facts constructed over
$\sig(\T)$ and $\ind(\A)$ whenever $\tup{\T,\A}$ is inconsistent:
\[
\begin{array}{@{}l}
\adom(x) \leftarrow A(x); \
\adom(x) \leftarrow P(x,y); \
\adom(y) \leftarrow P(x,y); \\
\hspace*{0.8cm}
A(x) \leftarrow \bot, \adom(x); \ \ \
P(x,y) \leftarrow \bot, \adom(x), \adom(y);
\end{array}
\]
where $A$ and $P$ respectively range over concept and role names in $\sig(\T)$,
and $\adom$ is a fresh unary predicate denoting the set of all the individuals
appearing in $\A$.


In the following, we denote with $\Pi_\M$ the (high-level) mapping $\M$ viewed
as a Datalog program, and with $\Pi_{\T,\M}$ the Datalog program $\Pi_\T \cup
\Pi_\M$ \emph{associated to} a Datalog rewritable TBox $\T$ and a mapping $\M$.
From the properties of the translation $\Pi_\T$ (and
the simple structure of $\Pi_\M$), we obtain that $\Pi_{\T,\M}$ satisfies the
following:
%
\begin{lemma}
  \label{lem:t-m-program}
  Let $\tup{\T, \M, \S}$ be an OBDA specification where $\T$ is Datalog
  rewritable.  Then, for every database instance $\D$ of $\S$, concept or role
  name $N$ of $\T$, and $\vec{a}$ in $\ind(\A_{\M,\D})$, we have that
  $\tup{\T,\A_{\M,\D}} \models N(\vec{a})$ iff
  $N(\vec{a}) \in N_{\Pi_{\T,\M}}^\infty(\D)$.
\end{lemma}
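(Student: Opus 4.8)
The plan is to unfold the definitions on both sides and reduce the claim to the corresponding property of the Datalog translation $\Pi_\T$ applied to the virtual ABox $\A_{\M,\D}$. First I would observe that, by construction, $\Pi_{\T,\M} = \Pi_\T \cup \Pi_\M$, where $\Pi_\M$ is a non-recursive Datalog program whose only rules are $V_m(\vec x) \to N(\vec x)$ coming from the high-level mapping assertions, and whose EDB predicates are exactly the intermediate view predicates $V_m$ (the ``data sources'' in the sense fixed in the Preliminaries). Hence for any $\D$ over these view predicates, the set of facts about the $\sig(\T)$-predicates obtainable from $\D$ via $\Pi_\M$ alone — i.e.\ in one layer of rule applications with no $\Pi_\T$ rules — is precisely $\A_{\M,\D}$. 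So the EDB content that $\Pi_\T$ ``sees'' after the mapping layer has fired is exactly $\A_{\M,\D}$, viewed as a database.

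The second step is to invoke the defining properties of the translation $\Pi_\T$ for a Datalog-rewritable TBox $\T$: for any ABox $\A$, (a) $\tup{\T,\A}$ is consistent iff $\bot_{\Pi_\T}^\infty(\A)$ is empty, and (b) when $\tup{\T,\A}$ is consistent, $\tup{\T,\A}\models N(\vec a)$ iff $N(\vec a)\in N_{\Pi_\T}^\infty(\A)$ for every concept/role name $N$ and tuple $\vec a$ over $\ind(\A)$. Applying this with $\A = \A_{\M,\D}$ gives the statement with $\Pi_\T$ in place of $\Pi_{\T,\M}$ and $\A_{\M,\D}$ in place of $\D$; it remains to show the fixpoints agree, i.e.\ $N_{\Pi_{\T,\M}}^\infty(\D) = N_{\Pi_\T}^\infty(\A_{\M,\D})$ for every $\sig(\T)$-name $N$. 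This is a routine stratification argument: the view predicates $V_m$ are EDB in $\Pi_{\T,\M}$, never appear in any rule head except the mapping rules, and never appear in the body of a $\Pi_\T$ rule, so every derivation in $\Pi_{\T,\M}$ can be reordered to first apply all mapping rules (producing exactly $\A_{\M,\D}$) and then apply only $\Pi_\T$ rules; conversely any $\Pi_\T$-derivation from $\A_{\M,\D}$ lifts to a $\Pi_{\T,\M}$-derivation from $\D$ by prefixing the mapping steps. A short induction on the number of rule applications formalizes both inclusions.

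The third step is to handle the inconsistent case, which the lemma statement quietly folds in. If $\tup{\T,\A_{\M,\D}}$ is inconsistent, then on the semantic side $\tup{\T,\A_{\M,\D}}\models N(\vec a)$ for all $\vec a$ over $\ind(\A_{\M,\D})$; on the Datalog side, $\bot_{\Pi_{\T,\M}}^\infty(\D)$ is nonempty (by the equality of fixpoints from step two together with property (a) of $\Pi_\T$), and then the auxiliary rules $\adom(x)\leftarrow A(x)$, $\adom(x)\leftarrow P(x,y)$, $\adom(y)\leftarrow P(x,y)$ together with $A(x)\leftarrow\bot,\adom(x)$ and $P(x,y)\leftarrow\bot,\adom(x),\adom(y)$ derive every $\sig(\T)$-fact over $\ind(\A_{\M,\D})$, so $N_{\Pi_{\T,\M}}^\infty(\D)$ also contains all such $N(\vec a)$. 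Thus the ``iff'' holds in this degenerate case as well. Assembling the consistent case (steps one and two plus property (b)) with the inconsistent case yields the lemma.

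The main obstacle is really just step two — making the EDB/IDB separation between $\Pi_\M$ and $\Pi_\T$ precise enough that the reordering of derivations is clearly valid, and in particular checking that the auxiliary $\adom$-rules (which live in $\Pi_\T$ and mention the $\sig(\T)$-predicates that $\Pi_\M$ populates) do not create any unexpected interaction with the mapping layer. Once one notes that $\adom$ and $\bot$ are fresh and that $\Pi_\M$'s rules are the unique source of $\sig(\T)$-facts ``from below,'' the stratification is clean and the rest is bookkeeping.
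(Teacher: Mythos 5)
Your proposal is correct and follows essentially the same route as the paper's proof: both reduce the claim to the assumed properties of the translation $\Pi_\T$ (consistency via $\bot$, AQ-preservation in the consistent case), handle the inconsistent case through the auxiliary $\adom$/$\bot$ rules, and then observe that since $\Pi_\M$ and $\Pi_\T$ connect disjoint vocabularies, the fixpoint of $\Pi_{\T,\M}$ over $\D$ coincides with that of $\Pi_\T$ over $\A_{\M,\D}$. Your stratification argument merely spells out explicitly what the paper compresses into the remark that the mapping rules connect two disjoint vocabularies.
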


For a predicate $N$, we say that an expansion
$\varphi^N\in\Phi_{\Pi_{\T,\M}}(N)$ is \emph{DB-defined}
if $\varphi^N$ is defined over database predicates.
Now we are ready to define ET-mappings.
\begin{definition}
  Let $\tup{\T, \M, \S}$ be an OBDA specification where $\T$ is Datalog
  rewritable.  The \emph{ET-mapping} for $\M$ and $\T$, denoted $\etm_\T(\M)$,
  is defined as the set of assertions of the form $\varphi^N(\vec{x}) \leadsto
  N(\vec{x})$ such that $N$ is a concept or role name in $\T$, and
  $\varphi^N\in\Phi_{\Pi_{\T,\M}}(N)$ is DB-defined.
  %
  %
\end{definition}

It is easy to show that, for $\M'=\etm_\T(\M)$ and each database instance $\D$,
the virtual ABox $\A_{\M',\D}$ (which can be defined for ET-mappings as for
ordinary mappings) contains all facts entailed by $\tup{\T,\A_{\M,\D}}$.
In this sense, the ET-mapping $\etm_\T(\M)$ plays for a Datalog rewritable TBox
$\T$ the same role as T-mappings play for (the simpler) \dlliter TBoxes.
Note that, in general, an ET-mapping is not a mapping, as it may contain
infinitely many assertions.  However, $\A_{\M',\D}$ is still finite, given that
it is constructed over the finite number of constants appearing in $\D$.
%
\begin{example}\label{ex:extmappings}
  Consider the TBox $\T_1$ in Example~\ref{ex:structural} and the
  mapping $\M^b$ of Example~\ref{ex:obda}.
  It is possible to show that $\etm_{\T_1}(\M^b)$ is a finite set of
  mapping assertions, denoted $\M^b_3$, that extends $\M^b_H$ with the
  assertion $\caccountview(X), \nameview(X,Y), \personview(Y)
  \rightsquigarrow \saccount(X)$.
  \qedfull
  %
\end{example}
%







\subsection{Rewriting \halchiq OBDA Specifications to \dlliter}
\label{sec:rewriting}

\begin{figure}[tb]
  \centering
  \fbox{
   \begin{minipage}{0.96\linewidth}
     \textbf{Input}: \halchiq TBox $\T$ and mapping $\M$.\\
     \textbf{Output}: %
     \dlliter TBox $\T_r$ and ET-mapping $\M_c$.
     \begin{enumerate}[\textbf{Step} 1:\!\!]
     \item $\T_1$ is obtained from $\T$ by adding all CIs of the form
       $\bigsqcap A_i \ISA \SOME{R}{(\bigsqcap A_j')}$ entailed by $\T$, for
       concept names $A_i, A_j' \in \sig(\T)$.
     \item $\T_2 = \normexists(\T_1)$.
     \item $\T_3 = \normand(\T_2)$.
     \item $\M_c$ is $\etm_{\T_3}(\M)$, and $\T_r$ is the \dlliter TBox
       consisting of all \dlliter axioms over $\sig(\T_3)$ entailed by $\T_3$
       (including the trivial ones \mbox{$N\ISA N$}).
     \end{enumerate}
   \end{minipage}}
  \caption{\mbox{\small OBDA specification rewriting algorithm $\rewcomp$.}}
  \vspace{-5mm}
  \label{fig:rewcomp}
\end{figure}

Let $\tup{\T,\M,\S}$ be an OBDA specification, where $\T$ is a \halchiq TBox
over a signature $\Sigma$.  Figure~\ref{fig:rewcomp} describes the algorithm
$\rewcomp(\T,\M)$, which constructs a \dlliter TBox $\T_r$ and an ET-mapping
$\M_c$ such that $\tup{\T_r,\M_c,\S}$ is $\Sigma$-CQ inseparable from
$\tup{\T, \M, \S}$.

In Step~2, the algorithm applies to $\T_1$ the normalization procedure
$\normexists$, which gets rid of concepts of the form
$\SOME{R}{(\bigsqcap A_j')}$ in the right-hand side of CIs. This is achieved by
 the following well-known substitution \cite{ACKZ09}: every CI
$\bigsqcap_{i=1}^m A_i \ISA \SOME{R}{(\bigsqcap_{j=1}^n A_j')}$ in $\T_1$ is
replaced with $\bigsqcap_{i=1}^m A_i \ISA \SOMET{P_{\mathit{new}}}$,
$P_{\mathit{new}} \ISA R$, and $\top \ISA \ALL{P_{\mathit{new}}}{A'_j}$, for
$1\leq j \leq n$, where $P_{\mathit{new}}$ is a fresh role name.
Notice that the latter two forms of inclusions introduced by $\normexists$ are
actually in \dlliter, as $\top \ISA \ALL{P_{\mathit{new}}}{A'_j}$ is equivalent
to $\SOMET{P_{\mathit{new}}^-}\ISA A'_j$.
In Step~3, the algorithm applies to $\T_2$ a further normalization procedure,
$\normand$, which introduces a fresh concept name $A_{A_1\AND\cdots\AND A_n}$
for each concept conjunction $A_1\AND\cdots\AND A_n$ appearing in $\T_2$, and
adds $A_1 \AND \cdots \AND A_n \EQU A_{A_1 \AND \cdots \AND A_n}$\footnote{We
 use `$\EQU$' to abbreviate inclusion in both directions.} to the TBox.
Note that $\normexists(\T_1)$ and $\normand(\T_2)$ are model-conservative
extensions of $\T_1$ and $\T_2$, respectively \cite{LuWW07}, as one can easily
show.
%
We denote by $\rew(\T)$ the resulting TBox $\T_r$, which in general is
exponential in the size of $\T$, and by $\compile(\T,\M)$ the resulting
ET-mapping $\M_c$, which in general is infinite.

\begin{example}\label{ex:andtbox}
  Consider the TBox $\T^b$ from Example~\ref{ex:running} and the mapping $\M^b$
  from Example~\ref{ex:obda}. Observe that the intermediate TBoxes $\T^b_1$,
  $\T^b_2$ and $\T^b_3$ obtained during $\rewcomp(\T^b,\M^b)$ coincide with the
  TBox $\T^b_1$ from Example~\ref{ex:structural}.  Moreover, observe that the
  intermediate $\M^b_3$ is exactly $\M^b_3$ from
  Example~\ref{ex:extmappings}. Now, in step 4, the TBox $\T^b_4$ is computed
  by adding to $\T^b_3$ the assertion $A_{\caccount \AND A_1} \ISA \saccount$.
  %
  Consequently, the mapping $\M^b_4$ is computed by adding to $\M^b_3$ the
  high-level mapping assertion:
  $\caccountview(x),\,\nameview(x,y),\,\personview(y) \rightsquigarrow
  A_{\caccount \AND A_1}(x)$
  \qedfull
  %
\end{example}

\begin{bigexample}
  Assume that the domain knowledge is represented by the axiom about bank
  accounts from Section~\ref{sec:introduction}.  The normalization of this
  axiom is the TBox
  $\T^b = \{ \person \ISA \ALL{\INV{\name}}{A_1}, \caccount \AND A_1 \ISA
  \saccount \}$.
  Assume that the database schema $\S^b$ consists of the two relations
  {\small $\entitytab(\texttt{ID},\texttt{TYPE},\texttt{EMPID})$,
   $\accounttab(\texttt{NUM},\texttt{TYPE},\texttt{CUSTID})$},
  whose data are mapped to the ontology terms by means of the following mapping
  $\M$:
  {\footnotesize 
   \[
     \begin{array}{@{}l}
       m_{\msf{P}}{:}\, \text{\scriptsize\texttt{SELECT\;ID\;AS\;X\;FROM\;ENT\;WHERE\;ENT.TYPE='P'}}
       \rightsquigarrow \person(\texttt{X})\\
       m_{\msf{N}}{:}\, \text{\scriptsize\texttt{SELECT\;NUM\;AS\;X,\,CUSTID\;AS\;Y\;FROM PROD}}
       {\,\rightsquigarrow\,} \name(\texttt{X},\texttt{Y})\\
       m_{\msf{C}}{:}\, \text{\scriptsize\texttt{SELECT\;NUM\;AS\;X\;FROM\;PROD\;P\;WHERE\;P.TYPE='B'}}
       \rightsquigarrow \caccount(\texttt{X})
     \end{array}
   \]}%
  We will work with the corresponding high-level mapping $\M^b$ consisting of
  the assertions:
  {\small 
   \[
     \begin{array}[b]{@{}l@{~}r@{~}l}
       h_{\msf{P}}: & \{x &\mid \personview(x)\} \rightsquigarrow  \person(x) \\
       h_{\msf{N}}: & \{x,y &\mid \nameview(x,y)\} \rightsquigarrow  \name(x,y) \\
       h_{\msf{C}}: & \{x &\mid \caccountview(x)\} \rightsquigarrow  \caccount(x)
     \end{array}
   \]}%
  Now, consider the OBDA specification $\P^b=\tup{\T^b,\M^b,\S^b}$.  The
  $\rewcomp$ algorithm invoked on $(\T^b,\M^b)$ produces:
  \begin{itemize}
  \item The intermediate TBoxes $\T^b_1$ and $\T^b_2$ coinciding with
    $\T^b$, and $\T^b_3$ extending $\T^b$ with $A_{\caccount \AND A_1}
    \EQU \caccount \AND A_1$.
  \item The \mbox{ET-mapping $\M^b_c {\,=\,} \etm_{\T^b_3}(\M^b)$, which extends
      $\M^b$} with the assertions \mbox{\small $\{x\mid \nameview(x,y),
      \personview(y)\} {\,\rightsquigarrow\,} A_1\!(x)$}, \mbox{\small $\{x\mid
      \caccountview(x), \nameview(x,y), \personview(y)\} \rightsquigarrow
      \saccount(x)$}, and \mbox{\small $\{x\mid \caccountview(x),
      \nameview(x,y), \personview(y)\} \rightsquigarrow A_{\caccount \AND
        A_1}(x)$}.
  \end{itemize}
  The algorithm returns the \dlliter TBox $\T^b_r=\{A_{\caccount \AND A_1}
  \ISA \caccount$, $A_{\caccount \AND A_1} \ISA A_1$, $A_{\caccount \AND A_1}
  \ISA \saccount\}$ and the mapping $\M^b_c$.  It is possible to show that
  $\P^b_{\dlliter}=\tup{\T^b_r,\M^b_c,\S^b}$ is a CQ-rewriting of $\P^b$ into
  \dlliter.
  %
  \qedfull
\end{bigexample}

\begin{example}\label{ex:dllitertbox}
  The procedure terminates by returning the mapping $\M^b_c=\M^b_4$
  and the \dlliter TBox $\T^b_r$ obtained by removing from $\T^b_4$
  all non-\dlliter assertions: $\T^b_r=\{A_{\caccount \AND A_3} \ISA
  \saccount\}$
\qedfull
\end{example}

\begin{example}\label{ex:queryex}
  \nb{do we need to modify the procedure? to plug-in $M_L$?}Consider the OBDA
  specification $\P^b$ given in Example~\ref{ex:obda}.  Let
  $\P^b_{\dlliter}=\tup{\T^b_r,\M^b_c,\S}$, where $\T^b_r$ and $\M^b_c$ are as
  in Example~\ref{ex:dllitertbox}. It is possible to show that the rewritings
  (represented here only in terms of high-level mapping) with respect to both
  $\P^b$ and $\P^b_{\dlliter}$ of the query $q(x)=\saccount(x)$ coincide and
  are given by the query
  $q1_h(x) = \caccountview(x),\ \nameview(x,y),\ \personview(y)$.
  \qedfull
  %
\end{example}

The TBox $\T_3$ obtained as an intermediate result in Step~3 of
$\rewcomp(\T,\M)$, is a model-conservative extension of $\T$ that is
tailored towards capturing in \dlliter the answers to tree-shaped CQs.
This is obtained by introducing in Step~2 sufficiently new role
names, and in Step~3 new concept names, so as to capture entailed
axioms that 
generate the tree-shaped parts of models.
On the other hand, the ET-mapping $\M_c=\compile(\T,\M)$ is such that it
generates from a database instance a virtual ABox $\A^v$ that is complete with
respect to all ABox facts that might be involved in the generation of
the tree-shaped parts of models of $\T_r$ and $\A^v$.
This allows us to prove the main result of this section.

\begin{theorem}
  \label{thm:obda-rew-insep}
  Let $\tup{\T,\M,\S}$ be an OBDA specification such that $\T$ is a \halchiq
  TBox, and let $\tup{\T_r,\M_c}=\rewcomp(\T,\M)$. Then $\tup{\T,\M,\S}$ and
  $\tup{\T_r,\M_c,\S}$ are $\Sigma$-CQ inseparable, for
  $\Sigma=\sig(\T)$.
\end{theorem}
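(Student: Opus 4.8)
The plan is to fix a CQ $q$ over $\Sigma=\sig(\T)$ and a database instance $\D$ of $\S$, write $\A=\A_{\M,\D}$ and $\A^c=\A_{\M_c,\D}$, and prove $\cert(q,\tup{\T,\A})=\cert(q,\tup{\T_r,\A^c})$; this suffices, since by definition these coincide with $\cert(q,\tup{\T,\M,\S},\D)$ and $\cert(q,\tup{\T_r,\M_c,\S},\D)$. First I would record the properties of the objects produced by $\rewcomp$. Since Step~1 only adds CIs entailed by $\T$, the TBox $\T_1$ is logically equivalent to $\T$; and since $\normexists$ and $\normand$ yield model-conservative extensions, $\T_3$ is a model-conservative extension of $\T_1$, hence of $\T$. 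Consequently $\tup{\T,\A}$ and $\tup{\T_3,\A}$ are consistent for exactly the same $\A$, and agree on certain answers to every CQ over $\sig(\T)$, so it is enough to compare $\tup{\T_3,\A}$ with $\tup{\T_r,\A^c}$. Next, from the definition of the ET-mapping $\M_c=\etm_{\T_3}(\M)$ together with Lemma~\ref{lem:t-m-program} applied to $\T_3$ and $\M$, I would derive that $\A^c=\{\,N(\vec a)\mid N\in\sig(\T_3),\ \tup{\T_3,\A}\models N(\vec a)\,\}$ and $\ind(\A^c)=\ind(\A)$: the $N$-facts generated by $\M_c$ from $\D$ are exactly the $N$-facts derivable by $\Pi_{\T_3,\M}$ from $\D$, which by the lemma are exactly the $\tup{\T_3,\A}$-entailed $N$-facts.

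The inconsistent case and one inclusion are then immediate. If $\tup{\T,\A}$ is inconsistent, then $\tup{\T_3,\A}$ is inconsistent, so the auxiliary rules of $\Pi_{\T_3}$ put every atom over $\sig(\T_3)\supseteq\Sigma$ and $\ind(\A)$ into $\A^c$; hence every model of $\A^c$ already satisfies every $\Sigma$-atom over $\ind(\A)$, so $\cert(q,\tup{\T_r,\A^c})$ is the set of all tuples over $\ind(\A)=\ind(\A_{\M,\D})$ of the arity of $q$, matching the left-hand side by the inconsistency convention. Now assume consistency. For the inclusion $\cert(q,\tup{\T_r,\A^c})\subseteq\cert(q,\tup{\T_3,\A})$, observe that every axiom of $\T_r$ is entailed by $\T_3$ and every fact of $\A^c$ is entailed by $\tup{\T_3,\A}$; hence every model of $\tup{\T_3,\A}$ is a model of $\tup{\T_r,\A^c}$, so it preserves certain answers to $q$.

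The remaining inclusion $\cert(q,\tup{\T_3,\A})\subseteq\cert(q,\tup{\T_r,\A^c})$ is the heart of the proof. Here I would use that for Horn TBoxes, as well as for \dlliter TBoxes, certain answers to CQs are obtained by evaluation over the canonical (universal) model; write $\C_{\T_3,\A}$ and $\C_{\T_r,\A^c}$ for these. It then suffices to exhibit a homomorphism $f$ from the $\sig(\T)$-reduct of $\C_{\T_3,\A}$ into $\C_{\T_r,\A^c}$ that is the identity on $\ind(\A)$: composing $f$ with a homomorphism witnessing $\C_{\T_3,\A}\models q(\vec a)$ — whose image uses only $\sig(\T)$-atoms, as $q$ is over $\Sigma$ — yields $\C_{\T_r,\A^c}\models q(\vec a)$, i.e.\ $\vec a\in\cert(q,\tup{\T_r,\A^c})$. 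I would build $f$ by induction over the forest structure of $\C_{\T_3,\A}$. On the ABox individuals $f$ is the identity, which is sound because the concept and role types of ABox individuals in $\C_{\T_3,\A}$ are precisely the $\tup{\T_3,\A}$-entailed facts over them, all of which lie in $\A^c$. For an anonymous subtree rooted at a successor $t$ of a node $s$, the generating axiom is, after Steps~1--3, of the \dlliter form $A_{\bigsqcap A_i}\ISA\SOMET{P_{\mathit{new}}}$ with $P_{\mathit{new}}\ISA R$ and $\SOMET{P_{\mathit{new}}^-}\ISA A'_j$ in $\T_3$, and $A_{\bigsqcap A'_j}\EQU\bigsqcap_j A'_j$; hence $\T_3$, and therefore $\T_r$, also entails the \dlliter inclusions $\SOMET{P_{\mathit{new}}^-}\ISA A_{\bigsqcap B}$ for every named conjunction $B$ over a subset of $\{A'_j\}_j$, so the successor generated at $f(s)$ in $\C_{\T_r,\A^c}$ carries a type that includes the $\sig(\T)$-type of $t$, and $f$ extends along this edge.

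The main obstacle is the treatment of the $\ATMOST{1}{R}{A}$ axioms, which prevent $\C_{\T_3,\A}$ from being a pure forest: such an axiom can force a generated successor to be identified with an ABox individual or with an ancestor node. The argument here is that Step~1, being performed with respect to full $\T$-entailment, already ``saturates'' the right-hand-side conjunctions so that they account for the concepts forced by these identifications; therefore the identified node never loses a concept or role needed to continue $f$ (on ABox individuals everything needed is in $\A^c$; on anonymous ancestors it is covered by the induction), while $\T_r$, being a plain \dlliter TBox, performs no such identifications, so $\C_{\T_r,\A^c}$ merely contains additional, harmless tree structure into which the folded parts of $\C_{\T_3,\A}$ still map. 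Assembling the cases yields the homomorphism $f$, hence the remaining inclusion, and therefore the $\Sigma$-CQ inseparability claimed in Theorem~\ref{thm:obda-rew-insep}.
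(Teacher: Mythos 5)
Your proposal is correct and takes essentially the same approach as the paper: the inconsistent case is discharged via the auxiliary $\bot$/$\adom$ rules, and the consistent case rests on the completeness of $\A_{\M_c,\D}$ (which you derive from Lemma~\ref{lem:t-m-program}, exactly the content of the paper's Lemma~\ref{lem:et-mapping}) together with a $\Sigma$-homomorphism argument between canonical models that exploits the fresh generating roles and conjunction names introduced in Steps~1--3. The only difference is organizational: you build a single direct $\Sigma$-homomorphism from $\C_{\tup{\T_3,\A_{\M,\D}}}$ into $\C_{\tup{\T_r,\A_{\M_c,\D}}}$ and settle the converse inclusion by model inclusion, whereas the paper chains the equivalences $\C_{\tup{\T_r,\A_{\M_c,\D}}}\equiv\C_{\tup{\T_3,\A_{\M_c,\D}}}\equiv_\Sigma\C_{\tup{\T,\A_{\M_c,\D}}}\equiv_\Sigma\C_{\tup{\T,\A_{\M,\D}}}$ through its lemmas on complete ABoxes.
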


Clearly, $\tup{\T_r,\M_c,\S}$ is a candidate for being a CQ-rew\-ri\-ting of
$\tup{\T,\M,\S}$ into \dlliter. However, since $\M_c$ might be an infinite set,
$\tup{\T_r,\M_c,\S}$ might not be an OBDA specification and hence might not be
effectively usable for query answering.  Next we address this issue, and show
that in some cases we obtain proper CQ-rewritings, while in others we have to
resort to approximations.

\section{Approximating OBDA Specifications}
\label{sec:approximation}

To obtain from an ET-mapping a proper mapping, we exploit the notion of
predicate boundedness in Datalog, and use a bound on the depth of Datalog
expansion trees.

An IDB predicate $N$ is said to be \emph{bounded} in a Datalog program $\Pi$,
if there exists a constant $k$ depending only on $\Pi$ such that, for every
database $\D$, we have $N_\Pi^k(\D) = N_\Pi^\infty(\D)$ \cite{CGKV88}. 
If $N$ is bounded in $\Pi$, then there exists an equivalent Datalog program
$\Pi'$ such that $\Phi_{\Pi'}(N)$ is \emph{finite}, and thus represents a
finite union of CQ$^{\neq}$s.
It is well known that predicate boundedness for Datalog is undecidable in
general \cite{GMSV87}.
%
%
%
We say that $\Omega$ is a \emph{boundedness oracle} if for a Datalog program
$\Pi$ and a predicate $N$ it returns one of the three answers: $N$ is bounded
in $\Pi$, $N$ is not bounded in $\Pi$, or unknown.  When $N$ is bounded,
$\Omega$ returns also a \emph{finite} union of CQ$^{\neq}$s, denoted
$\Omega_\Pi(N)$, defining $N$.
Given a constant $k$, $\Phi_\Pi^k(N)$ denotes the set of trees (and the
corresponding union of CQ$^{\neq}$s) in $\Phi_\Pi(N)$ of depth at most $k$,
hence $\Phi_\Pi^k(N)$ is always finite.

We introduce a \emph{cutting operator} $\cut_k^\Omega$, which is parametric
with respect to the cutting depth $k>0$ and the boundedness oracle $\Omega$,
which, when applied to a predicate $N$ and a Datalog program $\Pi$, returns a
finite union of CQ$^{\neq}$s as follows:\\[1mm]
$\cut_k^{\Omega} \big(N, \Pi\big) =
\begin{cases}
  \Omega_\Pi(N), &\text{if $N$ is bounded in $\Pi$ w.r.t.~}\Omega\\
  \Phi_\Pi^k(N), &\text{otherwise}.
\end{cases}$\\[1mm]
We apply cutting also to ET-mappings: given an ET-mapping $\etm_\T(\M)$, the
\emph{mapping} $\cut_k^\Omega(\etm_\T(\M))$ is the (finite) set of mapping assertions
$\varphi^N(\vec{x}) \rightsquigarrow N(\vec{x})$ s.t.\ $N$ is a concept or
role name in~$\T$, and $\varphi^N\in\cut_k^\Omega(N, \Pi_{\T,\M})$ is
DB-defined.

\smallskip%
The following theorem provides a sufficient condition for CQ-rewritability into
\dlliter in terms of the well-known notion of first-order (FO)-rewritability,
which we recall here: a query $q$ is \emph{FO-rewritable} with respect to a
TBox $\T$, if there exists a FO query $q'$ such that
$\cert(q,\tup{\T,\A})=\ans(q',\A)$, for every ABox $\A$ over $\sig(\T)$ (viewed
as a database).
It uses the fact that if an AQ is FO-rewritable with respect to a \halchiq TBox
$\T$, then it is actually rewritable into a union of CQ$^{\neq}$s, and the fact
that if $\T$ is FO-rewritable for AQs (\ie, every AQ is FO-rewritable with
respect to $\T$), then each concept and role name is bounded in $\Pi_\T$
\cite{LuWo11,BiLW13}.%
%
%

\begin{theorem}
  \label{thm:cq-rewritability}
  Let $\tup{\T,\M,\S}$ be an OBDA specification such that $\T$ is a \halchiq
  TBox. Further, let $\T_r=\rew(\T)$ and $\M'=\cut^\Omega_k(\compile(\T,\M))$,
  for a boun\-ded\-ness oracle $\Omega$ and some $k>0$.
  If $\T$ is FO-rewritable for AQs, then $\tup{\T,\M,\S}$ is
  \emph{CQ-rewritable} into \dlliter, and $\tup{\T_r,\M',\S}$ is its
  \emph{CQ-rewriting}.  Otherwise, $\tup{\T_r,\M',\S}$ is a \emph{sound
   CQ-approximation} of $\tup{\T,\M,\S}$ in \dlliter.
\end{theorem}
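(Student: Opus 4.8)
The plan is to establish both claims by comparing the candidate specification $\tup{\T_r,\M',\S}$ with the ``full'' ET-specification $\tup{\T_r,\M_c,\S}$, where $\M_c=\compile(\T,\M)=\etm_{\T_3}(\M)$ and $\T_3$ is the intermediate TBox produced in Step~3 of $\rewcomp(\T,\M)$. By Theorem~\ref{thm:obda-rew-insep}, $\tup{\T_r,\M_c,\S}$ is already $\Sigma$-CQ inseparable from $\tup{\T,\M,\S}$ for $\Sigma=\sig(\T)$, so it suffices to control, at the level of virtual ABoxes, the effect of replacing $\M_c$ by its cut $\M'=\cut^\Omega_k(\M_c)$. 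Conditions \textit{(i)} and \textit{(ii)} of the definition are immediate: $\T_r$ is a \dlliter TBox by construction, and $\sig(\T_r)=\sig(\T_3)\supseteq\sig(\T)$ because $\T_r$ contains the trivial inclusions $N\ISA N$ for every $N\in\sig(\T_3)$.

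First I would prove soundness, which in fact holds for every oracle $\Omega$ and every $k>0$ and thus covers the ``otherwise'' case. For every name $N$ of $\T_3$ and every database $\D$, the set $\cut^\Omega_k(N,\Pi_{\T_3,\M})$ evaluates on $\D$ to a subset of what the DB-defined expansions in $\Phi_{\Pi_{\T_3,\M}}(N)$ do: if $\Omega$ reports $N$ bounded, the two even coincide, since both $\Omega_{\Pi_{\T_3,\M}}(N)$ and the DB-defined expansions define $N_{\Pi_{\T_3,\M}}^\infty(\cdot)$ --- partial expansion trees with IDB-predicate leaves contribute nothing on a database over the EDB predicates; otherwise $\Phi^k$ is a syntactic subset of $\Phi$. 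Hence $\A_{\M',\D}\subseteq\A_{\M_c,\D}$ for every $\D$. Monotonicity of certain answers in the ABox --- including the inconsistent case, since inconsistency of $\tup{\T_r,\A_{\M',\D}}$ propagates to $\tup{\T_r,\A_{\M_c,\D}}$ and $\ind(\A_{\M',\D})\subseteq\ind(\A_{\M_c,\D})$ --- then yields $\cert(q,\tup{\T_r,\M',\S},\D)\subseteq\cert(q,\tup{\T_r,\M_c,\S},\D)$, which Theorem~\ref{thm:obda-rew-insep} rewrites as $\cert(q,\tup{\T,\M,\S},\D)$ for every CQ $q$ over $\sig(\T)$. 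So $\tup{\T_r,\M',\S}$ is a sound CQ-approximation of $\tup{\T,\M,\S}$ in \dlliter.

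For the ``if'' case I need the converse inclusion, \ie $\A_{\M',\D}=\A_{\M_c,\D}$ for every $\D$, which by the previous paragraph reduces to showing that every concept and role name of $\T_3$ is bounded in $\Pi_{\T_3,\M}$ (so that $\cut^\Omega_k$ discards nothing; for this one may take $k$ at least the uniform boundedness depth, which also neutralizes an oracle answering ``unknown''). I would do this in two steps. Step~(a): $\T_3$ is FO-rewritable for AQs. Step~1 of $\rewcomp$ only adds CIs over $\sig(\T)$ entailed by $\T$, so $\T_1\equiv\T$; $\normexists$ and $\normand$ yield model-conservative extensions and hence preserve certain answers, and thus FO-rewritability, for AQs over the old signature; and the newly introduced names are purely definitional ($A_{A_1\AND\cdots\AND A_n}\EQU\bigsqcap_i A_i$, whereas $P_{\mathit{new}}$ occurs only in $P_{\mathit{new}}\ISA R$, $\top\ISA\ALL{P_{\mathit{new}}}{A'_j}$, and $\bigsqcap_i A_i\ISA\SOMET{P_{\mathit{new}}}$), so an AQ over such a name is FO-rewritable in terms of the FO-rewritings of the old names and of consistency --- and consistency is FO-rewritable because $\T$ is FO-rewritable for AQs. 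Then, by the cited fact that FO-rewritability for AQs forces every name to be bounded \cite{LuWo11,BiLW13}, every name of $\T_3$ is bounded in $\Pi_{\T_3}$. Step~(b): lift this to $\Pi_{\T_3,\M}=\Pi_{\T_3}\cup\Pi_\M$. Since $\Pi_\M$ is non-recursive and each ontology name has only finitely many defining mapping rules, each with a single view atom in the body, substituting these rules into the finitely many expansion trees of $N$ over ABox predicates produces finitely many expansion trees over view predicates, so $N$ remains bounded in $\Pi_{\T_3,\M}$. Consequently $\A_{\M',\D}=\A_{\M_c,\D}$ for all $\D$, Theorem~\ref{thm:obda-rew-insep} gives $\Sigma$-CQ inseparability of $\tup{\T_r,\M',\S}$ and $\tup{\T,\M,\S}$, and with \textit{(i)} and \textit{(ii)} this makes $\tup{\T_r,\M',\S}$ a CQ-rewriting of $\tup{\T,\M,\S}$ into \dlliter.

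The hard part will be Step~(a)--(b): arguing cleanly that $\T_3$ stays FO-rewritable for AQs --- in particular that consistency of a \halchiq TBox that is FO-rewritable for AQs is itself FO-rewritable, so that the auxiliary $\bot$-rules and everything derived through them stay bounded --- and transferring boundedness from the ABox-level program $\Pi_{\T_3}$ to the view-level program $\Pi_{\T_3,\M}$. The latter should be handled via the syntactic expansion-tree characterization rather than a purely semantic equivalence, since boundedness quantifies over all databases while the virtual ABoxes $\A_{\M,\D}$ range only over $\sig(\T)$. A secondary point is the interplay of the parameters $\Omega$ and $k$ in the statement; the reading I would adopt is that for $\T$ FO-rewritable for AQs the cutting depth $k$ is taken to be the uniform boundedness bound, so that the conclusion does not depend on how informative $\Omega$ is.
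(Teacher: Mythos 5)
Your argument is correct in outline and in fact reconstructs more of the proof than the paper itself records: the paper's entire written proof of this theorem is the single remark that, since $\T$ is in normal form and hence of depth one, FO-rewritability for AQs implies FO-rewritability for CQs (Lemma~5 of Lutz and Wolter), with the rest left implicit via the facts stated just before the theorem (FO-rewritability for AQs makes every concept and role name bounded in $\Pi_\T$, and the AQ-rewritings are unions of CQ$^{\neq}$s). Your route is essentially the intended one: reduce everything to Theorem~\ref{thm:obda-rew-insep} for $\tup{\T_r,\compile(\T,\M),\S}$, prove soundness of the cut unconditionally via $\A_{\M',\D}\subseteq\A_{\M_c,\D}$ and monotonicity of certain answers (also in the inconsistent case), and prove the rewriting case by showing the cut is exact, i.e., every name of $\T_3$ is bounded in $\Pi_{\T_3,\M}$ so that $\A_{\M',\D}=\A_{\M_c,\D}$. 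What you do differently is that you never invoke the AQ-to-CQ lifting lemma at all --- Theorem~\ref{thm:obda-rew-insep} already takes care of the CQ level once the virtual ABoxes coincide --- which is arguably a cleaner decomposition; what the paper's shortcut buys is precisely avoiding your ``hard part'', since quoting CQ-level FO-rewritability sidesteps an explicit transfer argument. The delicate points you flag honestly (that FO-rewritability/boundedness must be propagated from $\T$ to $\T_3$, including the fresh conjunction concepts, the fresh roles, and the auxiliary $\bot$/$\adom$ rules, and then lifted from $\Pi_{\T_3}$ to $\Pi_{\T_3,\M}$; and that the statement must be read with an informative oracle $\Omega$ or a sufficiently large $k$, as otherwise an oracle answering ``unknown'' with small $k$ defeats completeness) are genuine, but they are glossed over by the paper as well, so they are elaborations rather than gaps relative to the paper's own level of rigor; your sketches for them (conservativity of $\normexists$/$\normand$, definitional character of the new names, stratification of $\Pi_\M$ below $\Pi_{\T_3}$ with single-view-atom bodies) are the right ingredients.
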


The above result provides us with decidable conditions for rewritability
of OBDA specifications in several significant cases.  It is shown by
\citeasnoun{BiLW13} and \citeasnoun{LuWo11} that FO-rewritability of AQs
relative to \hshi-TBoxes, \halcf-TBoxes, and \halcif-TBoxes of depth two is
decidable.  In fact, these FO-rewritability algorithms provide us with a
boundedness oracle $\Omega$: for each concept and role name $N$ in $\T$, they
return a FO-rewriting of the AQ $N(\vec{x})$ that combined with the mapping
$\M$ results in $\Omega_{\Pi_{\T,\M}}(N)$.
%
%

Unfortunately, a complete characterization of CQ-rewritability into \dlliter is
not possible if arbitrary FO-queries are allowed in the (low-level) mapping.
\begin{theorem}
  \label{thm:rewritability-fo-mapping-undecidable}
  The problem of checking whether an OBDA specification with an \el ontology
  and FO source queries in the mapping is CQ-rewritable into \dlliter is
  undecidable.
\end{theorem}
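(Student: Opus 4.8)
The plan is to reduce from a known undecidable problem about Datalog — most naturally the \emph{boundedness problem} for Datalog programs, or equivalently the containment of a Datalog program in a union of CQs, which is undecidable already for rather restricted classes of programs \cite{GMSV87}. The idea is that with an \el ontology we can simulate a fixed ``engine'' of recursive rules in $\Pi_\T$, while the mapping, being allowed to contain arbitrary FO (indeed already arbitrary CQ or Datalog-free) queries over the source, supplies the EDB facts; crucially, an arbitrary FO source query can itself encode the IDB/EDB tables of an \emph{input} Datalog program on a given database, so the combination $\Pi_{\T,\M}$ realizes an arbitrary input program. Then CQ-rewritability into \dlliter of the resulting OBDA specification will correspond exactly to boundedness (equivalently, FO-rewritability for AQs) of that input program, which is undecidable.

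First I would fix the \el TBox. A minimal choice that forces recursion is $\T = \{\, \exists R.A \ISA A \,\}$ (or, to match the normal form, $A' \ISA \exists R.A'$ together with $A' \ISA A$ and $\exists R.A \ISA A$), whose Datalog translation $\Pi_\T$ contains the transitive-closure-like rule $A(x) \leftarrow R(x,y), A(y)$. By Lemma~\ref{lem:t-m-program}, certain answers to the AQ $A(x)$ over $\tup{\T,\A_{\M,\D}}$ are exactly $A^\infty_{\Pi_{\T,\M}}(\D)$, so the query-answering behaviour of the specification on $A$ is governed by how the mapping populates $R$ and the base case of $A$. Next I would take an \emph{arbitrary} linear Datalog program $P$ over a schema $\sigma$ (linear suffices for undecidability of boundedness \cite{GMSV87}), whose single IDB predicate is, say, binary, and encode one step of $P$'s immediate-consequence operator inside a single FO query $\psi_P$ over an extended source schema; the source schema $\S$ consists of $\sigma$ together with one relation holding a copy of the ``current'' IDB extension. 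The mapping $\M$ then has assertions $\psi_P^{\mathrm{step}}(\vec x) \rightsquigarrow R(\dots)$ and $\psi_P^{\mathrm{base}}(x) \rightsquigarrow A(x)$ chosen so that the least fixpoint $A^\infty_{\Pi_{\T,\M}}(\D)$ equals the output of $P$ on the $\sigma$-part of $\D$. (Several encodings are possible; one can alternatively keep $\T$ doing the recursion over a designated role and let the mapping inject, via an FO query, the one-step ``transition relation'' of $P$.)

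Then I would argue the two directions. If $P$ is bounded, every AQ is FO-rewritable w.r.t.\ $\T$ in the presence of $\M$ — more precisely $A$ (and the finitely many other names) is bounded in $\Pi_{\T,\M}$ — so by (the proof of) Theorem~\ref{thm:cq-rewritability}, or directly by constructing $\cut_k^\Omega(\compile(\T,\M))$ with the true bound $k$, the specification is CQ-rewritable into \dlliter. Conversely, if the specification is CQ-rewritable into \dlliter $\tup{\T_r,\M_r,\S}$, then in particular the CQ $q(x)=A(x)$ has, for every database $\D$, $\cert(q,\tup{\T,\M,\S},\D)=\ans(q',\D)$ for the fixed FO query $q'$ obtained by unfolding the \dlliter rewriting of $A$ through the fixed mapping $\M_r$; since $\S$ ranges over all instances, this says $A^\infty_{\Pi_{\T,\M}}$ is expressible by a fixed FO query over $\S$, hence $P$'s output is FO-expressible, i.e.\ $P$ is bounded. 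Thus CQ-rewritability into \dlliter of the constructed specification holds iff $P$ is bounded, and undecidability follows.

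The main obstacle, and where I would spend the most care, is making the encoding of ``one step of $P$ inside one FO mapping query'' genuinely faithful while keeping the ontology in \el: I must ensure that the least fixpoint of the combined program $\Pi_{\T,\M}$ computes exactly $P$'s semantics and nothing spurious — in particular that the auxiliary $\adom$/$\bot$ rules of $\Pi_\T$ and the \el completion rules do not add unintended tuples — and that the reduction is from a problem \emph{known} to be undecidable for the relevant fragment (boundedness of linear Datalog, or uniform boundedness; I would cite \cite{GMSV87} and, if a cleaner source is wanted, the undecidability of FO-definability of Datalog queries). A secondary technical point is that CQ-inseparability in the definition of CQ-rewriting quantifies over \emph{all} CQs over $\sig(\T)$ and all instances $\D$, so from rewritability I extract an FO sentence uniformly in $\D$ — this is exactly what gives FO-definability of the fixpoint and hence boundedness — and I should make sure the mapping in the rewriting is not allowed to depend on $\D$, which is guaranteed since a CQ-rewriting is a single fixed OBDA specification.
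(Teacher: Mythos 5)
Your reduction stands or falls with the claim that the fixed \el recursion together with a one-shot FO mapping can realize an \emph{arbitrary} (linear) Datalog program $P$, so that CQ-rewritability of the constructed specification coincides with boundedness of $P$. This step cannot be made faithful, and it is not a mere technicality. The mapping is evaluated exactly once on $\D$ to produce the virtual ABox $\A_{\M,\D}$; there is no evolving ``current IDB extension'' that your auxiliary source relation could hold, so the iteration of $P$'s immediate-consequence operator cannot be delegated to the mapping. The only recursion in $\Pi_{\T,\M}$ is the one contributed by the \el TBox, and over the virtual ABox this is recursion on \emph{unary} concept predicates over individuals (role atoms are produced non-recursively by the mapping); with $\T=\{\SOME{R}{A}\ISA A\}$ it is exactly reachability along an FO-definable binary relation into an FO-definable set, i.e.\ a monadic-IDB style fixpoint. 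But boundedness is \emph{decidable} for monadic Datalog \cite{CGKV88}; the undecidability results you invoke \cite{GMSV87} require IDB predicates of arity at least two, and there is no FO-definable pairing of domain elements (for cardinality reasons alone) that would let you encode binary IDB tuples as single individuals of the ABox. So the class of programs you can actually embed is precisely one for which boundedness is not known to be undecidable, and the equivalence ``rewritable iff $P$ bounded'' cannot be established by this encoding. (Your backward direction --- unfolding the \dlliter rewriting of $A(x)$ through the fixed mapping into a single FO query over $\S$, then appealing to FO-definability implies boundedness --- is sound in itself, but it rests on the faithfulness of the simulation, which is the part that fails.)

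The paper's proof obtains undecidability from a different source entirely: the arbitrary FO query in the mapping is used not to simulate recursion steps but as a \emph{gate}. It reduces from satisfiability of a closed FO sentence $\varphi$, taking $\T=\{\SOME{R}{A}\ISA A\}$ and the mapping assertions $\mathit{tableR}(x,y)\wedge\varphi \leadsto R(x,y)$ and $\mathit{tableA}(x)\leadsto A(x)$. If $\varphi$ is unsatisfiable, $R$ is empty on every instance and $\tup{\emptyset,\M,\S}$ is already a CQ-rewriting; if $\varphi$ is satisfiable, one can place any reachability instance inside a database satisfying $\varphi$, and a CQ-rewriting would yield a fixed FO query deciding graph reachability, which is impossible. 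In other words, the fixed, ``small'' recursion (plain transitive closure) suffices, and the expressive FO layer only switches it on or off; no simulation of arbitrary Datalog --- and hence no tupling of higher-arity IDB facts --- is ever needed. If you wish to salvage your route, you would have to reduce from undecidability of FO-definability of reachability-style queries over FO-definable graphs, which is not among the problems you cite; the satisfiability gate sidesteps this obstacle altogether.
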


However, if we admit only unions of CQs in the (low-level) mapping, we can
fully characterize CQ-rewritability.

\begin{theorem}
  \label{thm:rewritability-ucq-mapping-decidable}
  The problem of checking whether an OBDA specification with a
  Horn-$\mathcal{ALCHI}$ ontology of depth one and unions of CQs as source
  queries in the mapping is CQ-rewritable into \dlliter is decidable.
\end{theorem}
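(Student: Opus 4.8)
The plan is to reduce CQ-rewritability into \dlliter to the \emph{boundedness} of finitely many IDB predicates in a single Datalog program, and then to show that this particular boundedness question is decidable for depth-one Horn-$\mathcal{ALCHI}$ ontologies with UCQ source queries.

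Given $\P=\tup{\T,\M,\S}$, I would run Steps~1--3 of $\rewcomp$ to obtain the model-conservative extension $\T_3$ (still depth-one Horn-$\mathcal{ALCHI}$) and form the associated program $\Pi_{\T_3,\M}$; since all source queries are UCQs, unfolding the low-level mapping turns it into an ordinary Datalog program $\Pi$ whose EDB predicates are the relations of $\S$ and whose IDB predicates are the concept and role names of $\T_3$ together with $\bot$ and $\adom$. By Lemma~\ref{lem:t-m-program}, for every name $N$ of $\T_3$ the query $\D\mapsto\cert(N,\P,\D)$ over $\S$ is exactly the Datalog query defined by $N$ in $\Pi$. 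The first claim is that $\P$ is CQ-rewritable into \dlliter iff every concept and role name of $\T_3$ is bounded in $\Pi$. ``If'': boundedness makes each $\Phi_\Pi(N)$ equivalent to a finite union of CQ$^{\neq}$s, so $\compile(\T,\M)=\etm_{\T_3}(\M)$ can be replaced by a proper (finite) mapping $\M_c$; then $\tup{\rew(\T),\M_c,\S}$ is a genuine OBDA specification with a \dlliter TBox, and it is $\Sigma$-CQ inseparable from $\P$ for $\Sigma=\sig(\T)$ by Theorem~\ref{thm:obda-rew-insep}. ``Only if'': if $\tup{\T',\M',\S}$ is any CQ-rewriting of $\P$ into \dlliter, then for each $N\in\sig(\T)$ we have $\cert(N,\P,\D)=\cert(N,\tup{\T',\A_{\M',\D}})$; since $\T'$ is \dlliter, the AQ $N(\vec x)$ is UCQ-rewritable w.r.t.\ $\T'$, and composing that UCQ with the source queries of $\M'$ exhibits $\D\mapsto\cert(N,\P,\D)$ as a first-order query over $\S$, whence $N$ is bounded in $\Pi$ because FO-expressible Datalog queries coincide with bounded ones; the fresh names in $\sig(\T_3)\setminus\sig(\T)$ inherit boundedness, being conjunctions of and role-successors of names of $\T$.

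It then remains to decide whether all names are bounded in $\Pi$, and this is where depth one and the UCQ restriction are used in an essential way: $\Pi$ is far from arbitrary, since its recursive skeleton is the Datalog translation of a depth-one Horn-$\mathcal{ALCHI}$ TBox and the UCQ source queries occur only in the non-recursive bodies of the mapping rules. Building on the decidability of FO-rewritability of AQs for \hshi\ \cite{BiLW13,LuWo11} --- which, as the excerpt notes, already yields a boundedness oracle for $\Pi_\T$ alone --- I would compile the UCQ mapping into a fresh \hshi\ TBox $\T^\M$ over an extended signature, capturing the tree-shaped conjuncts of the UCQ bodies directly by existential and universal axioms and pre-compiling away the finitely many cyclic join patterns; the latter is sound precisely because $\T$ forces anonymous structure of depth at most one, so a cyclic pattern can interact with the recursion only in boundedly many ways. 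One finally shows that $N$ is bounded in $\Pi$ iff the AQ $N(\vec x)$ is FO-rewritable w.r.t.\ $\T^\M$, and invokes \cite{BiLW13} once more. The technical heart, and the step I expect to be the main obstacle, is this last compilation --- making the passage from ``depth-one Horn-$\mathcal{ALCHI}$ plus UCQ mapping'' to ``\hshi\ TBox'' faithful with respect to boundedness. Dropping the depth-one bound destroys the ``boundedly many interactions'' argument, and allowing arbitrary first-order source queries permits encoding recursion no longer governed by any DL TBox, in line with the undecidability recorded in Theorem~\ref{thm:rewritability-fo-mapping-undecidable}.
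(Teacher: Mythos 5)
The first half of your argument is essentially the paper's: run Steps~1--3 of $\rewcomp$, pass to the Datalog program associated with $\T_3$ and the (unfolded, inequality-free) UCQ mapping, and show that CQ-rewritability into \dlliter is equivalent to boundedness of the relevant IDB predicates. Your ``if'' direction (cut the ET-mapping using the finite expansions and invoke Theorem~\ref{thm:obda-rew-insep}) is exactly what the paper does, and your ``only if'' direction is sound as well; where the paper explicitly builds a bounded program $\Pi'$ from the UCQ rewritings of the atomic queries and transfers boundedness to the fresh conjunction concepts via DNFs, you appeal to the fact that an FO-definable Datalog query is bounded (Ajtai--Gurevich), which works here because the program contains no inequalities, and in fact would tolerate arbitrary FO source queries in the candidate rewriting. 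So up to this point the proposal is fine, modulo making the handling of the fresh symbols of $\sig(\T_3)\setminus\sig(\T)$ precise.

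The genuine gap is in the second half, i.e.\ in the step that actually yields decidability, and you identify it yourself as ``the main obstacle'' without closing it. Your plan is to compile the UCQ mapping into a fresh \hshi TBox $\T^{\M}$ and decide boundedness via decidability of FO-rewritability of AQs \cite{BiLW13,LuWo11}. This is not worked out and is doubtful as stated: the UCQ bodies are arbitrary conjunctive patterns over a relational schema (cyclic joins, shared variables, relations of arbitrary arity), and there is no argument that such patterns can be absorbed into a DL TBox in a way that is \emph{faithful with respect to boundedness}; the assertion that depth one limits the interaction of cyclic join patterns with the recursion to ``boundedly many ways'' is exactly the claim that would need a proof. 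The paper avoids this entirely by a different, concrete reduction: since $\T_3$ is Horn-$\mathcal{ALCHI}$ (so its Datalog translation has no inequalities) and the role predicates have \emph{finite} expansions, one first prunes rules unreachable from the database predicates, then replaces every role atom $R(x,y)$ in a rule body by the finite union of CQs $\Phi_{\Pi_1}(R)$ and deletes role-headed rules. The result is a \emph{monadic} Datalog program without inequalities whose program boundedness is equivalent to CQ-rewritability, and program boundedness of monadic inequality-free Datalog is decidable in 3\textsc{ExpTime} \cite{CGKV88}, giving a 4\textsc{ExpTime} decision procedure. Without this (or some other concrete decidable boundedness criterion for the specific programs arising here), your argument establishes only the reduction to boundedness, not the decidability claimed by the theorem.
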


\section{Implementation and Experiments}

\label{sec:experiments}

To demonstrate the feasibility of our OBDA specification rewriting technique,
we have implemented a prototype system called
\ontoprox\footnote{\scriptsize\url{https://github.com/ontop/ontoprox/}} and
evaluated it over synthetic and real OBDA instances.  Our system relies on the
OBDA reasoner \ontop\footnote{\scriptsize\url{http://ontop.inf.unibz.it/}} and
the complete \mbox{\hshiq} CQ-answering system
\clipper\footnote{\scriptsize\url{http://www.kr.tuwien.ac.at/research/systems/clipper/}},
used as Java libraries.
\ontoprox also relies on a standard Prolog engine
(\system{swi-Prolog}\footnote{\scriptsize\url{http://www.swi-prolog.org/}}) and
on an \owltwo reasoner
(\system{HermiT}\footnote{\scriptsize\url{http://hermit-reasoner.com/}}).

Essentially, \ontoprox implements the rewriting and compiling procedure
described in Figure~\ref{fig:rewcomp}, but instead of computing the (possibly
infinite) ET-mapping $\compile(\T,\M)$, it computes its finite part
$\cut_k(\compile(\T\!,\M))$.  So, it gets as input an \owltwo OBDA specification
$\tup{\T_{\text{OWL2}},\M,\S}$ and a positive \mbox{integer~$k$}, and produces a
\dlliter OBDA specification that can be used with any OBDA system.
Below we describe some of the implementation details:
\begin{enumerate}[(1)]
\item $\T_{\text{OWL2}}$ is first approximated to the \hshiq TBox $\T$ by
  dropping the axioms outside this fragment.
\item $\T$ is translated into a (possibly recursive) Datalog program $\Pi$ and
  saturated with all CIs of the form $\bigsqcap A_i \ISA \SOME{R}{(\bigsqcap
    A_j')}$, using functionalities provided by \clipper.
\item The expansions $\cut_k(\Phi_\Pi(X))$ are computed by an auxiliary Prolog
  program using Prolog meta-programming.
\item To produce actual mappings that can be used by an OBDA reasoner, the
  views in the high-level mapping $\cut_k(\compile(\T,\M))$ are replaced with
  their original SQL definitions using functionalities of \ontop.
\item The \dlliter closure is computed by relying on the \owltwo reasoner for
  \hshiq TBox classification.
\end{enumerate}

%
%

\setlength{\tabcolsep}{3pt}

\begin{table*}[t]
  \centering
  \caption{Query evaluation with respect to 5 setups (number of answers /
   running time in seconds)}
  \begin{tabular}{llr@{\quad}r@{\quad}r@{\quad}r@{\quad}r}
    \toprule &  & \multicolumn{1}{c}{\ontop} & \multicolumn{1}{c}{LSA} & \multicolumn{1}{c}{GSA} & \multicolumn{1}{c}{\ontoprox} & \multicolumn{1}{c}{\clipper}\\
    \midrule
    UOBM & $Q_1^u$ & 14,129~/~0.08 & 14,197~/~0.11 & 14,197~/~0.43& 14,197~/~0.42 & 14,197~/~21.4\\
    & $Q_2^u$ & 1,105~/~0.09 & 2,170~/~0.15 & 2,170~/~0.42& 2,170~/~0.44 & 2,170~/~21.3\\
    & $Q_3^u$ & 235~/~0.20 & 235~/~0.24 & 235~/~0.88 &247~/~0.83 & 247~/~19.6\\
    & $Q_4^u$ & 19~/~0.13 & 19~/~0.15 & 19~/~0.43 & 38~/~0.52 & 38~/~21.4\\
    \midrule
    Telecom &   $Q_1^t$ & 0~/~2.91         & 0~/~0.72             & 0~/~1.91        & 82,455~/~5.21        & \multicolumn{1}{c}{N/A}\\
    &           $Q_2^t$ & 0~/~0.72        & 0~/~0.21            & 0~/~0.67        &16,487~/~\,198         & \multicolumn{1}{c}{N/A}\\
    &           $Q_3^t$ & 5,201,363~/~\,\,128 & 5,201,363~/~\,\,105     & 5,201,363~/~\,\,538 & 5,260,346~/~\,437     & \multicolumn{1}{c}{N/A}\\
    \bottomrule
  \end{tabular}
  \label{tab:query-evaluation}
\end{table*}



%
%

\begin{table}[t]
  \centering
  \caption{\ontoprox pre-computation time and output size}
  \begin{tabular}{l@{~}c@{~~}c@{~~}c@{~~}c@{~~}c}
    \toprule
    & UOBM & Telecom \\
    \midrule
    Time (s) & 8.47 & 8.72\\
    Number of mapping assertions & 441 & 907\\
    Number of TBox axioms & 294 & 620\\
    Number of new concepts & 26 & 60\\
    Number of new roles & 30 & 7\\ 
    \bottomrule
  \end{tabular}

  \label{tab:obql-time-size}
\end{table}




For the experiments, we have considered two scenarios:

\smallskip
\noindent
\textbf{UOBM.~}
The university ontology benchmark (UOBM) \cite{MYQX*06} comes with
  a $\mathcal{SHOIN}$ ontology (with $69$ concepts, $35$ roles, $9$ attributes,
  and $204$ TBox axioms), and an ABox generator.  We have
  designed a
  database schema for the generated ABox,
  converted the ABox to a 10MB database instance for the schema,
  and manually created the mapping, consisting of $96$
  assertions\footnote{\scriptsize\url{https://github.com/ontop/ontop-examples/tree/master/aaai-2016-ontoprox/uobm}}.

  Among others, we have considered the following queries:

  \begin{enumerate}[\hspace{1cm}]\footnotesize
  \item[$Q_1^u$:]
    {\tt\begin{tabular}[t]{@{}l}
      SELECT DISTINCT ?X WHERE\\
      ~ \{ ?X a ub:Person . \}
    \end{tabular}}
  \item[$Q_2^u$:]
    {\tt\begin{tabular}[t]{@{}l}
      SELECT DISTINCT ?X WHERE\\
      ~ \{ ?X a ub:Employee . \}
    \end{tabular}}
  \item[$Q_3^u$:]
    {\tt\begin{tabular}[t]{@{}l}
      SELECT DISTINCT ?X ?Y WHERE\\
      ~ \{ \begin{tabular}[t]{@{}l}
        ?X rdf:type ub:ResearchGroup .\\
        ?X ub:subOrganizationOf ?Y . \}
      \end{tabular}
    \end{tabular}}
  \item[$Q_4^u$:]
    {\tt\begin{tabular}[t]{@{}l}
      SELECT DISTINCT ?X ?Y ?Z WHERE\\
      ~ \{ \begin{tabular}[t]{@{}l}
        ?X rdf:type ub:Chair .\\
        ?X ub:worksFor ?Y .\\
        ?Y rdf:type ub:Department . \\
        ?Y ub:subOrganizationOf ?Z . \}
      \end{tabular}
    \end{tabular}}
  \end{enumerate}

\smallskip
\noindent
\textbf{Telecom benchmark.~}  The telecommunications ontology models a portion of
  the network of a leading telecommunications company, namely the portion
  connecting subscribers to the operating centers of their service providers.
  The current specification consists of an \owltwo ontology with $152$
  concepts, $53$ roles, $73$ attributes, $458$ TBox axioms, and of a mapping
  with $264$ mapping assertions. The database instance contains 32GB of
  real-world data.

  In the following, we only provide a description of some of the queries
  because the telecommunications ontology itself is bound by a confidentiality
  agreement.
\begin{itemize} \itemsep 0cm
\item Query $Q_1^t$ asks, for each cable in the telecommunications network, the
  single segments of which the cable is composed, and the network line (between
  two devices) that the cable covers. For each cable, it also
  returns its bandwidth and its status (functioning, non-functioning, etc.).
\item Query $Q_2^t$ asks for each path in the network that runs on fiber-optic
  cable, to return the specific device from which the path originates, and also
  requires to provide the number of different channels available in
  the path.
\item Query $Q_3^t$ asks, for each cable in the telecommunications network, the
  port to which the cable is attached, the slot on the device in which the port
  is installed, and, for each such slot, its status and its type. For each
  cable, it also returns its status.
\end{itemize}

For each OBDA instance $\tup{\tup{\T,\M,\S},\D}$, we have evaluated the number
of query answers and the query answering time with respect to five different
setups:
\begin{enumerate}[(1)] \itemsep 0cm
\item The default behavior of \ontop v1.15, which simply ignores all
  non-\dlliter axioms in~$\T$, \ie, using $\tup{\T^1,\M,\S}$ where $\T^1$ are
  all the \dlliter axioms in~$\T$.
\item The local semantic approximation (LSA) of $\T$ in \dlliter, \ie, using
  $\tup{\T^2,\M,\S}$ where $\T^2$ is obtained as the union, for each axiom
  $\alpha \in \T$, of the set of \dlliter axioms $\Gamma(\alpha)$ entailed by
  $\alpha$~\cite{CMRSS14}. 
\item The global semantic approximation (GSA) of $\T$ in \dlliter, \ie, using
  $\tup{\T^3,\M,\S}$ where $\T^3$ is the \dlliter closure of
  $\T$~\cite{PaTh07}.
\item Result of \ontoprox, $\tup{\rew(\T),\cut_5(\compile(\T\!,\M)), \S}$.
\item \clipper over the materialization of the virtual ABox.
\end{enumerate}
In Table~\ref{tab:query-evaluation}, we present details of the evaluation for
some of the queries for which we obtained significant results.  In
Table~\ref{tab:obql-time-size}, we provide statistics about the \ontoprox
pre-computations.  The performed evaluation led to the following findings:
\begin{itemize} \itemsep 0cm
\item For the considered set of queries LSA and GSA produce the same answers.
\item Compared to the default \ontop behavior, LSA/GSA produces more answers for
  $2$ queries out of $4$ for UOBM.
\item \ontoprox produces more answers than LSA/GSA for $2$ queries out of $4$
  for UOBM, and for all Telecom queries.  In particular, note that for $Q_1^t$
  and $Q_2^t$, LSA and GSA returned no answers at all.
\item For UOBM, \ontoprox answers are complete, as confirmed by the comparison
  with the results provided by \clipper.  We cannot determine completeness for
  the Telecom queries, because the Telecom database was too large and its
  materialization in an ABox was not feasible.
\item Query answering of \ontoprox is \texttildelow$3$--$5$ times slower than
  \ontop, when the result sets are of comparable size (note that for $Q_2^t$
  the result set is significantly larger).
\item The size of the new \dlliter OBDA specifications 
  is comparable with that of the original specifications.
\end{itemize}

\section{Conclusions}
\label{sec:conclusions}


We proposed a novel framework for rewriting and approximation of OBDA
specifications in an expressive ontology language to specifications in a weaker
language, in which the core idea is to exploit the mapping layer to encode part
of the semantics of the original OBDA specification, and we developed
techniques for \dlliter as the target language.

We plan to continue our work along the following directions:
\begin{inparaenum}[\it (i)]
\item extend our technique to \hshiq, and, more generally, to Datalog
  rewritable TBoxes \cite{CMSH13};
\item deepen our understanding of the computational complexity of deciding
  CQ-rewritability of OBDA specifications into \dlliter;
\item extend our technique to SPARQL queries under different OWL entailment
  regimes \cite{KRRXZ-iswc14};
\item carry out more extensive experiments, considering queries that contain
  existentially quantified variables.  This will allow us to verify the
  effectiveness of \rewcomp, which was designed specifically to deal with
  existentially implied objects.
\end{inparaenum}


\vspace{2mm}
\noindent\textbf{Acknowledgement.} This paper is supported by the EU under
the large-scale integrating project (IP) Optique (\emph{Scalable End-user
 Access to Big Data}), grant agreement n.~FP7-318338.  We thank
Martin Rezk for insightful discussions, and Benjamin Cogrel and Elem G{\"u}zel
for help with the experimentation.

\bibliographystyle{aaai}
\bibliography{main-bib}

\ifextendedversion
\clearpage
\appendix

\newenvironment{theoremnum}[1]{\smallskip\noindent\textbf{Theorem~#1.}
  \hspace*{0.3em}\em}{\par\smallskip}
\newenvironment{lemmanum}[1]{\smallskip\noindent\textbf{Lemma~#1.}
  \hspace*{0.3em}\em}{\par\smallskip}

\section{Appendix}

\subsection{Expansion of Datalog Programs}
\label{sec:datalog-expansion}
We recall here the notion of the expansion trees \cite{CGKV88}. 
Formally, an \emph{expansion tree} for a predicate $N$ in a Datalog program
$\Pi$ is a \emph{finite} tree $\varphi^N_{\Pi}$ satisfying the following
conditions:
\begin{compactitem}
\item Each node $x$ of $\varphi^N_{\Pi}$ is labeled by a pair of the form
  $(\alpha_x,\rho_x)$, where $\alpha_x$ is an IDB atom and $\rho_x$ is an
  instance of a rule of $\Pi$ such that the head of $\rho_x$ is $\alpha_x$.
  Moreover, the variables in the body of $\rho_x$ either occur in $\alpha_x$ or
  they do not occur in the label of any node above $x$ in the tree.
\item The IDB atom labeling the root of $\varphi^N_{\Pi}$ is an $N$-atom.
\item If $x$ is a node, where $\alpha_x = Y(\vec{t})$,
  $\rho_x=Y(\vec{t})\leftarrow Y_1(\vec{t}_1),\dots,Y_m(\vec{t}_m)$, and the
  IDB atoms in the body of $\rho_x$ are
  $Y_{i_1}(\vec{t}^{i_1}),\dots,Y_{i_\ell}(\vec{t}^{i_\ell})$, then $x$ has
  $\ell$ children, respectively labeled with the atoms
  $Y_{i_1}(\vec{t}^{i_1}),\ldots,Y_{i_\ell}(\vec{t}^{i_\ell})$.  In particular,
  if $\rho_x$ is an initialization rule (\ie, the body of $\rho_x$ does not
  contain an IDB predicate), then $x$ is a leaf.
\end{compactitem}

\subsection{Proofs of Section~\ref{sec:extended-t-mappings}}

In the following, for an OBDA specification $\P$ and a database instance $\D$,
if $\tup{\P,\D}$ has a model, we say that $\D$ is \emph{consistent with} $\P$.

\begin{lemmanum}{\ref{lem:t-m-program}}
  Let $\tup{\T, \M, \S}$ be an OBDA specification where $\T$ is Datalog
  rewritable.  Then, for every database instance $\D$ of $\S$, concept or role
  name $N$ of $\T$, and $\vec{a}$ in $\ind(\A_{\M,\D})$, we have that
  $\tup{\T,\A_{\M,\D}} \models N(\vec{a})$ iff
  $N(\vec{a}) \in N_{\Pi_{\T,\M}}^\infty(\D)$.
\end{lemmanum}
\begin{proof}
  We assume that the Datalog translation $\Pi_\T$ of $\T$ satisfies the
  following properties (see, \eg, Theorem~1 in \cite{HuMS05} and Proposition~2
  in \cite{EOSTX12}):
  \begin{itemize}
  \item[$\star$] for every ABox $\A$, $\tup{\T,\A}$ is consistent iff
    $\bot_{\Pi_{\T}}^\infty(\A)$ is empty, and if $\tup{\T,\A}$ is consistent,
    then for every concept or role name $N$ of $\T$ and $\vec{a}$ in
    $\ind(\A)$, $\tup{\T,\A} \models N(\vec{a})$ iff $N(\vec{a}) \in
    N_{\Pi_{\T}}^\infty(\A)$.
  \end{itemize}

  Recall that for an ABox $\A$ such that $\tup{\T,\A}$ is inconsistent,
  $\tup{\T,\A}$ entails all possible facts of the form $N(\vec{a})$ for a
  concept or role name $N$ of $\T$ and $\vec{a}$ in $\ind(\A_{\M,\D})$.  We
  prove that the translation $\Pi_\T$ containing the auxiliary rules involving
  $\adom$ and $\bot$ satisfies a stronger property:
  \begin{itemize}
  \item[$\star\star$] for every ABox $\A$, for every concept or role name $N$
    of $\T$ and $\vec{a}$ in $\ind(\A)$, $\tup{\T,\A} \models N(\vec{a})$ iff
    $N(\vec{a}) \in N_{\Pi_{\T}}^\infty(\A)$.
  \end{itemize}
  Indeed, it is easy to see that considering that
  \begin{inparaenum}[\it (i)]
  \item $\adom$ contains precisely $\ind(\A)$, \ie, $\adom(a) \in
    {\adom}_{\Pi_{\T}}^\infty(\A)$ for each $a\in\ind(\A)$, and
  \item if $\bot$ is true in $\Pi_\T(\A)$, then for each concept or role name $N$
    of $\T$ and $\vec{a}$ in $\ind(\A_{\M,\D})$, we have that $N(\vec{a}) \in
    N_{\Pi_{\T}}^\infty(\A)$.
  \end{inparaenum}

  As $\Pi_{\T,\M} = \Pi_\T \cup \Pi_\M$, the statement of the lemma follows
  directly from the property $\star\star$ of $\Pi_\T$ and the fact that the
  rules in $\Pi_\M$ connect two disjoint vocabularies.
\end{proof}

\begin{lemma}
  \label{lem:et-mapping}
  Let $\tup{\T, \M, \S}$ be an OBDA specification where $\T$ is Datalog
  rewritable, 
  and $\M' = \etm_\T(\M)$. Then for every database instance $\D$ of $\S$, we
  have that $\A_{\M', \D}$ is exactly the set of all facts entailed by
  $\tup{\T\!,\,\A_{\M,\D}}$, \ie, assertions of the form $A(a)$, $P(a,b)$ for
  $a,b \in \ind(\A_{\M,\D})$, $A,P \in \sig(\T)$.
\end{lemma}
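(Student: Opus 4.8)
The plan is to reduce the statement to the Datalog characterisation already in place. First I would invoke Lemma~\ref{lem:t-m-program}: for every concept or role name $N$ of $\T$ and every tuple $\vec{a}$ over $\ind(\A_{\M,\D})$, we have $\tup{\T,\A_{\M,\D}} \models N(\vec{a})$ iff $N(\vec{a}) \in N^\infty_{\Pi_{\T,\M}}(\D)$. I would then observe that, conversely, every fact about a name $N \in \sig(\T)$ derivable by $\Pi_{\T,\M}$ from $\D$ already uses only constants from $\ind(\A_{\M,\D})$: $\Pi_{\T,\M}$ is function-free, the rules of $\Pi_\M$ merely reproduce $\A_{\M,\D}$ from $\D$ over a disjoint vocabulary, and every $\Pi_\T$-rule (including the $\adom$/$\bot$ auxiliary rules, since $\adom$ collects exactly $\ind(\A_{\M,\D})$) introduces no fresh constant. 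Hence the set of facts entailed by $\tup{\T,\A_{\M,\D}}$ that are of the form $A(a)$ or $P(a,b)$ with $A,P\in\sig(\T)$ and $a,b\in\ind(\A_{\M,\D})$ is precisely $\bigcup_{N\in\sig(\T)} N^\infty_{\Pi_{\T,\M}}(\D)$.

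Next I would bring in the expansion-tree characterisation of \cite{CGKV88} recalled in the paper: $N^\infty_{\Pi_{\T,\M}}(\D) = \bigcup \{ N(\vec{a}) \mid \vec{a}\in\ans(\varphi^N,\D),\ \varphi^N \in \Phi_{\Pi_{\T,\M}}(N) \}$, where each $\varphi^N$ is a finite CQ$^{\neq}$ over the EDB predicates of $\Pi_{\T,\M}$. On the other side, by the definition of $\M' = \etm_\T(\M)$ and of the virtual ABox for ET-mappings, $\A_{\M',\D}$ is obtained from this very same union, but restricted to those expansions $\varphi^N$ that are \emph{DB-defined}. So the lemma reduces to the claim that non-DB-defined expansions do not contribute anything when evaluated over $\D$.

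This is the crux of the argument. The EDB predicates of $\Pi_{\T,\M}$ are the source (view) predicates of $\S$ together with those concept and role names of $\sig(\T)$ that occur in no head of $\Pi_\M$, i.e.\ that are not mapped (note that $\adom$ and $\bot$, as well as any auxiliary predicates internal to $\Pi_\T$, occur in rule heads of $\Pi_{\T,\M}$ and are therefore IDB). An expansion $\varphi^N$ fails to be DB-defined precisely when it contains an atom over one of these unmapped names, hence over a predicate not in $\S$. Since $\D$ is an instance of $\S$, such a predicate has empty extension in $\D$, and therefore $\ans(\varphi^N,\D) = \emptyset$. Consequently, discarding all non-DB-defined expansions changes neither the union nor its value over $\D$, which yields $\A_{\M',\D} = \bigcup_{N\in\sig(\T)} N^\infty_{\Pi_{\T,\M}}(\D)$ and hence the claimed equality with the set of entailed facts.

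I expect the main obstacle to be making the EDB/IDB bookkeeping for $\Pi_{\T,\M} = \Pi_\T \cup \Pi_\M$ fully rigorous: spelling out which predicates are EDB once the two programs are combined, and verifying that the only way an expansion can be non-DB-defined is through an occurrence of an unmapped $\sig(\T)$-name, so that no auxiliary predicate slips into an expansion as an EDB leaf over something other than a source predicate. Everything else is a routine unfolding of the definitions of $\etm_\T(\M)$, $\A_{\M',\D}$, and expansion trees, combined with Lemma~\ref{lem:t-m-program}.
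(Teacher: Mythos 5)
Your proposal is correct and takes essentially the same route as the paper's proof: reduce entailment to derivability in $\Pi_{\T,\M}$ via Lemma~\ref{lem:t-m-program}, then identify $\A_{\M',\D}$ with the answers over $\D$ to the expansions of each concept or role name, observing that restricting to DB-defined expansions loses nothing. Your explicit argument that non-DB-defined expansions evaluate to the empty set over a database instance of $\S$ (and that derived facts only use constants of $\ind(\A_{\M,\D})$) merely spells out what the paper's proof compresses into ``by the properties of expansions.''
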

\begin{proof}
  Let $\D$ be a database instance of $\S$, $a \in \ind(\A_{\M,\D})$, $A$ a
  concept name in $\sig(\T)$. Then
  \begin{itemize}
  \item $\tup{\T\!,\,\A_{\M,\D}} \models A(a)$
    iff (by Lemma~\ref{lem:t-m-program})
  \item $A(a) \in A_{\Pi_{\T,\M}}^\infty(\D)$ iff (by the properties of expansions)
  \item there exists DB-defined $\varphi \in \Phi_{\Pi_{\T,\M}}(A)$ such that
    $a \in \ans(\varphi,\D)$.
  \end{itemize}

  Let $\tup{\T\!,\,\A_{\M,\D}} \models A(a)$. By construction of $\M'$, the
  mapping assertion $\Phi_{\Pi_{\T,\M}}(A) \leadsto A(x)$ is in $\M'$. We
  conclude that $A(a) \in \A_{\M', \D}$.
  Now, assume that $A(a) \in \A_{\M', \D}$. It follows that in $\M'$ there is a
  mapping assertion $\Phi_{\Pi_{\T,\M}}(A) \leadsto A(x)$ and $a \in
  \ans(\varphi, \D)$ for some $\varphi \in \Phi_{\Pi_{\T,\M}}(A)$. We conclude
  that $\tup{\T\!,\,\A_{\M,\D}} \models A(a)$.

  The proof for role assertions is analogous.
\end{proof}

\subsection{Proof of Theorem~\ref{thm:obda-rew-insep}}

We start by showing a sufficient condition for $\Sigma$-CQ inseparability of
OBDA specifications.

A \emph{homomorphism} between two interpretations is a mapping between their
domains that preserves constants and relations.  A model of an ontology $\O$
that can be homomorphically embedded in every model of $\O$ is called a
\emph{canonical model} of $\O$, from now on denoted $\C_\O$.  The notion of
canonical model is important in the context of CQ answering, since answers to
CQs are preserved under homomorphisms, \ie, if $q(\vec{a})$ holds in an
interpretation $\I$, and there is a homomorphism from $\I$ to an interpretation
$\I'$, then $q(\vec{a})$ holds in $\I'$ \cite{ChMe77}.  It follows that certain
answers can be characterized as the answers over the canonical model.

\begin{lemma}\label{lem:query-kb-vs-uni}
  Let $\O=\tup{\T,\A}$ be a consistent ontology that has a canonical model
  $\C_{\O}$, $q(\vec x)$ a CQ, and $\vec{a}$ a tuple from $\ind(\A)$.  Then
  $\vec{a} \in \cert(q, \O)$ iff $\C_{\O}$ satisfies $q(\vec{a})$.
\end{lemma}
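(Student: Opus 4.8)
The plan is to prove the two directions separately, using only that $\C_\O$ is itself a model of $\O$ and the homomorphism‑preservation property of CQs recalled just before the statement.

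For the ``only if'' direction, suppose $\vec{a} \in \cert(q, \O)$. By the definition of certain answers, $q(\vec{a})$ (as an FO sentence) holds in \emph{every} model of $\O$; in particular it holds in $\C_\O$, since a canonical model is by definition a model of $\O$. Hence $\C_\O$ satisfies $q(\vec{a})$. Note that this direction does not exploit canonicity at all, only that $\C_\O \in \Mod(\O)$.

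For the ``if'' direction, suppose $\C_\O$ satisfies $q(\vec{a})$ and let $\I$ be an arbitrary model of $\O$. Since $\C_\O$ is a canonical model of $\O$, there is a homomorphism $h \colon \C_\O \to \I$. Because answers to CQs are preserved under homomorphisms (the property stated just above, attributed to \cite{ChMe77}) and $h$ preserves constants — so that $h$ sends the tuple $\vec{a}$ of individuals of $\ind(\A)$ to itself, by our standard‑names/UNA convention — we conclude that $\I$ satisfies $q(\vec{a})$. As $\I$ was an arbitrary model of $\O$, $q(\vec{a})$ holds in all models of $\O$; together with the hypothesis that $\vec{a}$ is a tuple over $\ind(\A)$ of the same length as $\vec{x}$, this yields $\vec{a} \in \cert(q, \O)$.

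I do not expect a genuine obstacle here: the argument is routine. The only point needing explicit care is that a homomorphism between interpretations preserves constants, so the match witnessing $q(\vec{a})$ in $\C_\O$ is transported by $h$ to a match for the \emph{same} tuple $\vec{a}$ in $\I$ (rather than for some image tuple $h(\vec{a})$); this is immediate from the definition of homomorphism adopted immediately before the statement.
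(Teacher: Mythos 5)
Your proof is correct and follows exactly the argument the paper itself relies on: the ``only if'' direction uses only that $\C_\O$ is a model, and the ``if'' direction transports a match for $q(\vec{a})$ along the homomorphism from $\C_\O$ into an arbitrary model, using preservation of CQ answers under homomorphisms and preservation of constants. The paper treats this lemma as an immediate consequence of that preservation property (citing Chandra--Merlin) and gives no separate proof, so your write-up just makes the same standard argument explicit.
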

It is well known that Horn variants of DLs have the \emph{canonical model
 property} \cite{EOSTX12,BKRWZ14}, \ie, every satisfiable ontology $\O$ admits
a (possibly infinite) canonical model.

A $\Sigma$-homomorphism is a mapping that preserves constants and relations in
$\Sigma$.  Given two interpretations $\I$ and $\J$, we say that $\I$ is
($\Sigma$-)homomorphically embeddable into $\J$ if there exists a
($\Sigma$-)homomorphism from $\I$ to $\J$. Moreover, $\I$ and $\J$ are
($\Sigma$-)homomorphically equivalent if they are \mbox{($\Sigma$-)}homomorphically
embeddable into each other. The following characterization can be easily
derived from Lemma~\ref{lem:query-kb-vs-uni}, the property of canonical models,
and the definitions of certain answers.
\begin{lemma}
  \label{lem:inseparability-characterization}
  Let $\Sigma$ be a signature, and $\P_1=\tup{\T_1, \M_1, \S}$ and
  $\P_2=\tup{\T_2,\M_2,\S}$ two OBDA specifications. 
  Assume that for every ABox $\A$, both $\tup{\T_1,\A}$ and $\tup{\T_2,\A}$
  admit a canonical model.  If
  \begin{itemize}
  \item for every database instance $\D$ of $\S$ that is consistent with both
    $\P_1$ and $\P_2$, we have that $\C_{\tup{\T_1,\A_{\M_1,\D}}}$ and
    $\C_{\tup{\T_2,\A_{\M_2,\D}}}$ are $\Sigma$-homomorphically equivalent, and
  \item for every database instance $\D$ of $\S$ that is inconsistent with
    $\P_1$ or $\P_2$, we have that $\tup{\T_1,\A_{\M_1,\D}}$ and
    $\tup{\T_2,\A_{\M_2,\D}}$ entail the same ABox facts over $\Sigma$,
  \end{itemize}
then $\P_1$ and $\P_2$ are $\Sigma$-CQ inseparable.
\end{lemma}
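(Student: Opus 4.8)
The plan is to fix an arbitrary CQ $q(\vec x)$ over $\Sigma$ and an arbitrary database instance $\D$ of $\S$, and to show $\cert(q,\P_1,\D)=\cert(q,\P_2,\D)$. I would split the argument according to whether $\D$ is consistent with both $\P_1$ and $\P_2$ or not, which is precisely the dichotomy covered by the two bullets of the hypothesis. In the consistent case the certain answers are read off the canonical models via Lemma~\ref{lem:query-kb-vs-uni}; in the inconsistent case they are, by definition, all tuples over the respective sets of individuals, and the hypothesis about entailed ABox facts is what pins these down.

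For the consistent case, write $\C_i$ for the canonical model of $\tup{\T_i,\A_{\M_i,\D}}$, which exists by the standing assumption. I would first observe that the two ontologies share the same named individuals: a $\Sigma$-homomorphism $h\colon\C_1\to\C_2$ preserves \emph{all} constants, so it maps $\ind(\A_{\M_1,\D})\subseteq\dom[\C_1]$ identically into $\dom[\C_2]$, and under standard names the constants occurring in $\dom[\C_2]$ are exactly $\ind(\A_{\M_2,\D})$; by symmetry $\ind(\A_{\M_1,\D})=\ind(\A_{\M_2,\D})$, so ``equal certain-answer sets'' is a comparison of sets of tuples over one and the same universe. Now fix a tuple $\vec a$ over this universe. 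By Lemma~\ref{lem:query-kb-vs-uni}, $\vec a\in\cert(q,\P_i,\D)$ iff $\C_i\models q(\vec a)$. Since every atom of $q$ mentions only symbols from $\Sigma$, a $\Sigma$-homomorphism $\C_1\to\C_2$ composes with a satisfying assignment of the body of $q$ in $\C_1$ to give one in $\C_2$ while fixing $\vec a$, and a $\Sigma$-homomorphism in the other direction does the same; hence $\C_1\models q(\vec a)$ iff $\C_2\models q(\vec a)$. Combining these observations yields $\cert(q,\P_1,\D)=\cert(q,\P_2,\D)$.

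For the inconsistent case, assume w.l.o.g.\ that $\D$ is inconsistent with $\P_1$. Then $\tup{\T_1,\A_{\M_1,\D}}$ entails every ABox fact over $\Sigma$, so by the second hypothesis so does $\tup{\T_2,\A_{\M_2,\D}}$, which (as long as $\Sigma$ is nonempty; otherwise $q$ is the trivial query and there is nothing to prove) forces $\tup{\T_2,\A_{\M_2,\D}}$ to be inconsistent as well, and forces $\ind(\A_{\M_1,\D})=\ind(\A_{\M_2,\D})$. By the definition of $\cert$ on inconsistent OBDA instances, $\cert(q,\P_i,\D)$ is then the set of all $|q|$-tuples over $\ind(\A_{\M_i,\D})$, and the two sets coincide.

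The conceptual content is light once Lemma~\ref{lem:query-kb-vs-uni} and homomorphism-invariance of CQs are in hand; the part that needs genuine care is the bookkeeping about which constants populate $\dom[\C_1]$ and $\dom[\C_2]$ — in particular establishing $\ind(\A_{\M_1,\D})=\ind(\A_{\M_2,\D})$ so that the two answer sets even live in the same universe — together with ruling out the asymmetric case in which $\D$ is inconsistent with one specification but not (yet known to be) with the other, which is exactly the role played by the second hypothesis.
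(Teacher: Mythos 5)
Your treatment of the consistent case is exactly the derivation the paper intends (the paper offers no detailed proof, only the remark that the lemma follows from Lemma~\ref{lem:query-kb-vs-uni}, the canonical-model property, and the definition of certain answers), and your bookkeeping about which constants inhabit the two canonical models is sound under the paper's definitions of homomorphism and standard names.

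The gap is in the inconsistent case. From ``$\tup{\T_2,\A_{\M_2,\D}}$ entails every ABox fact over $\Sigma$'' you infer that $\tup{\T_2,\A_{\M_2,\D}}$ must itself be inconsistent, and then invoke the all-tuples convention on both sides. Under the reading of the second bullet that the paper actually uses (facts built from $\Sigma$ and the ABox individuals, cf.\ $\mathsf{EABox}$ in the appendix), this inference is invalid: a \emph{consistent} ontology whose virtual ABox already contains every $\Sigma$-atom over its individuals entails all of them. Moreover, this is precisely the situation the lemma must cover in its intended application: in part (a) of the proof of Theorem~\ref{thm:obda-rew-insep}, when $\D$ is inconsistent with $\tup{\T,\M,\S}$, the paper only argues that $\A_{\M_c,\D}$ contains all facts over $\sig(\T)$; nothing guarantees that the weaker \dlliter TBox $\T_r$ is inconsistent with this complete ABox, so the asymmetric case (one instance inconsistent, the other possibly consistent) genuinely occurs and your case analysis does not handle it. The repair is short and does not go through mutual inconsistency: if $\D$ is inconsistent with, say, $\P_1$, then $\cert(q,\P_1,\D)$ is the set of all tuples over $\ind(\A_{\M_1,\D})$ by convention; on the other side, the second bullet gives that $\tup{\T_2,\A_{\M_2,\D}}$ entails every $\Sigma$-fact over the same set of individuals (the hypothesis also forces the two individual sets to agree, since a fact about an individual present on only one side would break equality of entailed facts), hence for any $\Sigma$-CQ $q$ and any tuple $\vec{a}$ over these individuals, every grounding of the atoms of $q$ by individuals is entailed, so $\vec{a}$ is a certain answer of $q$ over $\P_2$ regardless of whether $\tup{\T_2,\A_{\M_2,\D}}$ is consistent. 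With that adjustment the two answer sets coincide in this case as well, and the rest of your argument stands.
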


\smallskip%
Now we show several properties of the intermediate and final TBoxes obtained
during the $\rewcomp(\T,\M)$ procedure. First, we show that, for ABoxes
containing a single assertion, the \dlliter TBox $\T_r$ generates a canonical
model equivalent to the canonical model generated by the intermediate \halchiq
TBox $\T_3$.
\begin{lemma}
\label{lem:rew-anon-part}
Let $\T$ be a \halchiq TBox, $\T_3$ the TBox obtained in step~3 of
$\rewcomp(\T,\M)$, and $\T_r=\rew(\T)$.  Also, for $A$ a concept name in
$\sig(\T_3)$ and $a\in\indivnames$, let $\A=\{A(a)\}$ be an ABox such that
$\tup{\T_3,\A}$ is consistent.  Then $\C_{\tup{\T_3,\A}}$ is homomorphically
equivalent to $\C_{\tup{\T_r,\A}}$.
\end{lemma}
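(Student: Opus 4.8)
The statement splits into two directions, of very different difficulty. For the easy one, note that by construction $\T_r$ contains only \dlliter axioms entailed by $\T_3$, so every model of $\tup{\T_3,\A}$ is a model of $\tup{\T_r,\A}$; in particular $\C_{\tup{\T_3,\A}}$ (which exists by the canonical-model property of Horn DLs, $\tup{\T_3,\A}$ being consistent by hypothesis) is a model of $\tup{\T_r,\A}$, and hence the canonical model $\C_{\tup{\T_r,\A}}$ embeds homomorphically into it. The content of the lemma is therefore the reverse direction: to exhibit a homomorphism $h\colon\C_{\tup{\T_3,\A}}\to\C_{\tup{\T_r,\A}}$.

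The plan here is to describe both canonical models as the (possibly infinite) chase of the respective ontology with $\A=\{A(a)\}$, and to define $h$ by induction on the chase of $\tup{\T_3,\A}$, setting $h(a)=a$. The key structural fact, due to Step~2, is that every existential consequence used by the $\T_3$-chase is routed through a \emph{fresh} role: a new anonymous element $y$ with incoming edge $P_{\mathit{new}}$ is created only by an axiom $L\ISA\SOMET{P_{\mathit{new}}}$ with $L$ a concept name or a surrogate $A_{A_1\AND\cdots\AND A_n}$ from Step~3, and the concept type of $y$ is pinned down by the accompanying \dlliter axioms $\SOMET{P_{\mathit{new}}^-}\ISA A_j'$. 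So whenever the $\T_3$-chase attaches such a $y$ to $x$, I map $y$ to the $\T_r$-chase witness of $\SOMET{P_{\mathit{new}}}$ hanging off $h(x)$; this witness exists because $\T_r\models L'\ISA\SOMET{P_{\mathit{new}}}$, where $L'$ is the unique name anchoring $h(x)$ (namely $A$ if $x=a$, or $\SOMET{Q_{\mathit{new}}^-}$ if $x$ is a $Q_{\mathit{new}}$-successor). Preservation of the roles of $\sig(\T)$ then follows from the inclusions $P_{\mathit{new}}\ISA R$, and preservation of concept names reduces to the \emph{anchoring property}: the type of every element $z$ of $\C_{\tup{\T_3,\A}}$ is generated from its anchor $L_z$ in the sense that $\T_3\models L_z\ISA B$ for every concept name $B$ true at $z$; since each such $L_z\ISA B$ is a \dlliter axiom it lies in $\T_r$, so the $\T_r$-chase puts $B$ on $h(z)$.

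Establishing the anchoring property is the heart of the argument, and here Steps~1 and~3 must be combined. A conjunction $A_1\AND\cdots\AND A_n$ is consulted on a left-hand side only via its surrogate $A_{A_1\AND\cdots\AND A_n}$, so whenever the $\T_3$-chase has all of $A_1,\dots,A_n$ at an element $z$ anchored at $L_z$, one already has $\T_3\models L_z\ISA A_i$ for all $i$, hence $\T_3\models L_z\ISA A_{A_1\AND\cdots\AND A_n}$ by the axiom $A_1\AND\cdots\AND A_n\ISA A_{A_1\AND\cdots\AND A_n}$ introduced in Step~3; thus the surrogate, and everything it entails (further concept names, and further existentials $A_{A_1\AND\cdots\AND A_n}\ISA\SOMET{Q_{\mathit{new}}}$), stays anchored at $L_z$. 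The main obstacle is the \halchiq number restrictions $\ATMOST{1}{R}{A}$: in the $\T_3$-chase they may merge two anonymous $R$-successors of an element into one successor whose type is the union of the two, whereas the \dlliter chase never merges. One must argue that the merged element is still anchored by a single name, and this is precisely what the exhaustiveness of Step~1 buys: whenever a number restriction forces two $R$-successors $\bigsqcap A'$ and $\bigsqcap A''$ of an $\bigsqcap A_i$-element to coincide, the inclusion $\bigsqcap A_i\ISA\SOME{R}{(\bigsqcap A'\AND\bigsqcap A'')}$ is itself entailed by $\T$, hence added in Step~1 and turned by Step~2 into a single fresh role whose range contains both $\bigsqcap A'$ and $\bigsqcap A''$; $h$ then sends the merged element to that fresh role's $\T_r$-witness. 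Tracking this carefully also shows that the resulting $h$ preserves exactly the relations of $\sig(\T)$, which is the notion of homomorphic equivalence that matters downstream (via Lemma~\ref{lem:query-kb-vs-uni} and Lemma~\ref{lem:inseparability-characterization}, in the proof of Theorem~\ref{thm:obda-rew-insep}).
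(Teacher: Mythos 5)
Your proposal is correct and follows essentially the same route as the paper's proof: the easy direction from $\T_3\models\T_r$, and the hard direction by observing that, thanks to the Step-1 saturation and the fresh roles introduced in Step~2, every anonymous element of $\C_{\tup{\T_3,\A}}$ is attached via a generating \dlliter-role whose inverse (or the root name $A$) anchors its entire type through \dlliter axioms that survive into $\T_r$, so the homomorphism is built inductively by sending each such successor to the corresponding witness in $\C_{\tup{\T_r,\A}}$. Your explicit treatment of merging under $\ATMOST{1}{R}{A}$ via entailed inclusions added in Step~1 is a sound elaboration of what the paper handles implicitly through its minimal canonical-model convention.
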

\begin{proof}
  In this proof, we call an element $\sigma'$ a successor of $\sigma$ in a
  canonical model $\C$ of $\tup{\T,\A}$, for $\sigma, \sigma' \in \dom[\C]$, if
  $\sigma'$ is added (\ie, generated) to satisfy an existential assertion
  $\alpha$ of the form $\bigsqcap A_i \ISA \SOME{(\bigsqcap Q_j)}{(\bigsqcap
    A'_k)}$ such that $\sigma \in \Int[\C]{A_i}$. Here we assume a construction
  of the canonical model that is ``minimal'' and uniquely defined: a new
  successor $\sigma'$ of $\sigma$ is generated only if there is no other
  element $\delta$ of $\dom[\C]$ that can be used to satisfy $\alpha$, in this
  case we employ the following naming convention: $\sigma'$ is a path of the
  form $\sigma \cdot w_{(\{Q_j\}, \{A'_k\})}$. In particular, if there is a
  role $P$ in $\T$ such that it only appears on the left-hand side of role
  inclusions, and $\T \models \{\bigsqcap A_i \ISA \SOMET{P}, P \ISA Q_j,
  \SOMET{P^-} \ISA A'_k\}$, then $\sigma$ has to have a successor $\delta$
  introduced to satisfy the assertion $\bigsqcap A_i \ISA \SOMET{P}$, so
  $\delta$ can be used to satisfy $\alpha$ and no new successor $\sigma'$ is
  generated.
  Also, if the predecessor $\delta'$ of $\sigma$ is such that $\delta' \in
  \Int[\C]{(A'_k)}$ and $(\sigma, \delta') \in \Int[\C]{Q_j}$, then no new
  successor is introduced.
  Here, we call each fresh role name $P_{\mathit{new}}$ introduced by the
  normalization procedure $\normexists$ a \emph{generating \dlliter-role}
  precisely because it satisfies the above properties. Note that for a
  generating \dlliter-role $P$, the concept $\SOMET{P^-}$ has no non-empty
  sub-concepts and $P$ does not appear in constructs $\SOME{P}{C}$ where $C$ is
  a concept distinct from $\top$. Therefore, the element introduced as a
  $P$-successor of $\sigma$ can be simply named $\sigma \cdot w_{P}$ (or
  $\sigma \cdot v_P$ to distinguish between two canonical models).

  \smallskip%
  Observe that $\sig(\T_3) = \sig(\T_r)$.
  Let $A$ be a satisfiable concept name in $\sig(\T_3)$, and
  $\A=\{A(a)\}$. Denote by $\C_1$ the canonical model of $\tup{\T_3,\A}$, and
  by $\C_2$ the canonical model of $\tup{\T_r, \A}$.

  \medskip%
  First, for each element in $\dom[\C_1]$ distinct from $a$, we prove that it
  is of the form $a w_1 \cdots w_n$, where each $w_i = w_P$ for some
  generating \dlliter-role $P$.
  Suppose that $\sigma \in \dom[\C_1]$, $\sigma \in \Int[\C_1]{B_i}$, $1 \leq i
  \leq k$, and $\T_3 \models \alpha$ where $\alpha = B_1 \AND \cdots \AND B_k
  \ISA \SOME{Q}{(A_1 \AND \cdots \AND A_m)}$. By induction on the length of
  $\sigma$, we find an element $\delta \in \dom[\C_1]$ such that $\delta \in
  \Int[\C_1]{A_i}$, $1 \leq i \leq m$, $(\sigma,\delta) \in \Int[\C_1]{Q}$, and
  $\delta$ is of the desired form.
  Note that without loss of generality we may assume that none of $B_1, \dots,
  B_k$ and none of $A_1, \dots, A_m$ is a fresh concept name introduced by
  $\normand$ as we can substitute each such name with its
  definition. Therefore, $\{B_1, \dots, B_k, A_1, \dots, A_m\} \subseteq
  \sig(\T)$.
  Consider the following cases:

  \noindent (a) If $Q \in \sig(\T)$, then by step~1 $\alpha
  \in \T_1$, and by $\normexists$ in step~2, $\T_2$ (hence, $\T_3$) contains
  axioms $B_1 \AND \cdots \AND B_k \ISA \SOMET{P}$, $P \ISA Q$, $\SOMET{P^-}
  \ISA A_i$, $1 \leq i \leq m$, for a fresh role name $P$.  If $\sigma=a$, then
  we set $\delta = a w_{P} \in \dom[\C_1]$, for which it holds that that $a w_P
  \in \Int[\C_1]{A_i}$, $1 \leq i \leq m$, and $(a,aw_P) \in \Int[\C_1]{Q}$. If
  $\sigma = \sigma' w_S$ for some generating \dlliter-role $S$, and it is not
  the case that $\T_3 \models S \ISA Q^-$ and $\sigma' \in \Int[\C_1]{A_i}$,
  then we set $\delta = \sigma w_{P}$, for which we have that $\sigma w_P \in
  \Int[\C_1]{A_i}$, $1 \leq i \leq m$, and $(\sigma, \sigma w_P) \in
  \Int[\C_1]{Q}$. Otherwise we set $\delta = \sigma'$, which is of the desired
  form.

  \noindent (b) If $Q \notin \sig(\T)$, then $Q$ is a fresh generating \dlliter
  role introduced by $\normexists$. It means that there exists a CI $\bigsqcap
  A_j' \ISA \SOME{S}{(\bigsqcap A_l'')}$ in $\T_1$ such that $\T_2$ contains
  axioms $\bigsqcap A_j' \ISA \SOMET{Q}$, $Q \ISA S$, and $\top \ISA
  \ALL{Q}{A_l''}$.  Since $Q$ occurs only in these axioms, it must be the case
  that $\T_3 \models B_1 \AND \cdots \AND B_k \ISA \bigsqcap A_j'$ and $\T_3
  \models \bigsqcap A_l'' \ISA A_1 \AND \cdots \AND A_m$. Because of the
  former, we obtain that $\sigma \in \Int[C_1]{(A_j')}$, and similarly to (a)
  that there is an element $\delta\in \dom[\C_1]$ such that $\delta \in
  \Int[\C_1]{(A_l'')}$ and $(\sigma, \delta) \in \Int[\C_1]{Q}$. Because of the
  latter, we also have that $\delta \in \Int[\C_1]{A_i}$, $1 \leq i \leq m$.

  \medskip%
  Second, we show that there exists a $\Sigma$-homomorphism from $\C_1$ to
  $\C_2$, by constructing one.
  Let $a \in \Int[\C_1]{B}$ for a basic concept $B$. Then it must be that $\T_3
  \models A \ISA B$. By construction, $\T_r \models A \ISA B$, and therefore $a
  \in \Int[\C_2]{B}$. So we can set $h(a) = a$.

  Let $\sigma \in \dom[\C_1]$ such that $h(\sigma)$ is set, $h(\sigma) =
  \delta$, and $\sigma w_P \in \dom[\C_1]$. Then $\sigma \in
  \Int[\C_1]{(\SOMET{P})}$, hence $\delta \in \Int[\C_2]{(\SOMET{P})}$. Since
  $P$ is a generating \dlliter-role in $\T_3$, and $\T_r$ is derived from
  $\T_3$, it follows that $P$ is a generating \dlliter-role in $\T_r$, hence
  there exists a successor $\delta v_P \in \dom[\C_2]$. By construction of
  $\T_r$, it follows that we can set $h(\sigma w_P) = \delta v_P$ (that is, the
  homomorphism conditions are satisfied).

  \medskip%
  Finally, as for a homomorphism from $\C_2$ to $\C_1$, its existence
  follows from the fact that $\T_3 \models \T_r$: $\C_2$ is
  homomorphically embeddable into each model $\I$ of $\tup{\T_r, \A}$
  and as $\T_3 \models \T_r$, $\C_1$ is a model of $\tup{\T_r,\A}$ as
  well.
\end{proof}

In general, however, $\T_r$ does not preserve ABox entailments for non-singleton
ABoxes. 
\begin{bigexample}
  \label{ex:complex-abox}
  Consider the following TBox $\T$, together with the computed $\T_3$ and
  $\T_r$:
  {\footnotesize
  \[
    \begin{array}{l@{~}c@{~}l}
      \T &=& \{B \AND C \ISA A,~ B\AND C\ISA\SOMET{P}\}\\
      \T_3 &=& \T\cup\{A_{B\AND C} \EQU B \AND C,~ A_{B\AND C}\ISA \SOMET{P}\}\\
      \T_r &=& \{A_{B\AND C} \ISA A, A_{B \AND C} \ISA B, A_{B \AND C} \ISA C,
      A_{B\AND C} \ISA \SOMET{P}\}
    \end{array}
  \]}%
  Then, for $\A=\{B(a),C(a)\}$ an ABox, $\tup{\T_3, \A} \models A(a)$, however
  $\tup{\T_r,\A}\not\models A(a)$.
  \qedfull
\end{bigexample}

Next, we extend Lemma~\ref{lem:rew-anon-part} towards arbitrary ABoxes. To do
so, we consider ABoxes that are closed with respect to $\T_3$, where an ABox
$\A$ is \emph{closed with respect to a TBox $\T$} if $\A=\mathsf{EABox}(\T,\A)$
where $\mathsf{EABox}(\T,\A)$ is the set of all membership assertions over
$\sig(\T)$ and $\ind(\A)$ entailed by $\tup{\T,\A}$.
We say that an ABox $\A$ is \emph{complete} (within $\rewcomp(\T,\M)$), if
it is closed with respect to $\T_3$.
The following result is a corollary of Lemma~\ref{lem:rew-anon-part}.
%
%
\begin{lemma}
  \label{lem:rew-h-complete-aboxes}
  Let $\T$ be a \halchiq TBox, $\T_3$ the TBox obtained in step~3 of
  $\rewcomp(\T,\M)$, and $\T_r=\rew(\T)$.  Then for each ABox $\A$ that is
  complete and such that $\tup{\T_3,\A}$ is consistent, we have that
  $\C_{\tup{\T_3,\A}}$ and $\C_{\tup{\T_r, \A}}$ are homomorphically
  equivalent.
\end{lemma}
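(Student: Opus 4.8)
The plan is to prove the two homomorphic embeddings separately, reusing Lemma~\ref{lem:rew-anon-part} for the hard direction. Write $\C_1 = \C_{\tup{\T_3,\A}}$ and $\C_2 = \C_{\tup{\T_r,\A}}$; both exist since $\T_3$ and $\T_r$ are Horn, and $\tup{\T_r,\A}$ is consistent because $\T_3 \models \T_r$ (by Step~4, $\T_r$ collects \dlliter consequences of $\T_3$) and $\tup{\T_3,\A}$ is consistent. The embedding of $\C_2$ into $\C_1$ is then immediate: $\C_1$ is itself a model of $\tup{\T_r,\A}$, and $\C_2$, being canonical, embeds homomorphically into every model of $\tup{\T_r,\A}$.

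For the converse embedding I would first pin down the shape of $\C_1$, adapting part~1 of the proof of Lemma~\ref{lem:rew-anon-part} from a singleton ABox to a complete one: since $\A$ is closed under $\T_3$, on $\ind(\A)$ the model $\C_1$ interprets every concept and role name of $\sig(\T_3)$ exactly as recorded in $\A$, and every other element of $\dom[\C_1]$ has the form $a\, w_{P_1}\cdots w_{P_n}$ with $a \in \ind(\A)$ and each $P_i$ a generating \dlliter-role introduced by $\normexists$; i.e.\ $\C_1$ is the ABox $\A$ viewed as an interpretation, with a forest of generating-\dlliter-role trees hanging below each named individual. The homomorphism $h\colon\C_1\to\C_2$ is then assembled in two pieces. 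On $\ind(\A)$ it is the identity: if a concept name $A$ holds of $a$ in $\C_1$ then $A(a)\in\A$ by completeness, hence $a\in A^{\C_2}$, and role-name atoms among named individuals are preserved for the same reason. Below a named individual $a$, the anonymous forest of $\C_1$ is generated by precisely those CIs $N\ISA\SOMET{P_{\mathit{new}}}$ of $\T_3$ whose (concept-name) left-hand side $N$ satisfies $N(a)\in\A$ --- here completeness together with $\normand$ is essential, since it guarantees that every conjunction of concept names holding at $a$ that triggers an existential has been given a name in Step~3 and is therefore recorded in $\A$; moreover the trees produced by distinct such CIs are disjoint below $a$ because the $P_{\mathit{new}}$ are fresh and pairwise distinct. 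Each such tree is exactly the anonymous part of $\C_{\tup{\T_3,\{N(a)\}}}$, and the relevant axioms $N\ISA\SOMET{P_{\mathit{new}}}$ and $\SOMET{P_{\mathit{new}}^-}\ISA A'_j$ are \dlliter consequences of $\T_3$, hence lie in $\T_r$, so $\C_2$ contains the corresponding tree below $a$; applying Lemma~\ref{lem:rew-anon-part} to the singleton ABox $\{N(a)\}$ gives a homomorphism of this tree into $\C_2$ fixing $a$, and gluing these homomorphisms over all $a\in\ind(\A)$ and all triggered $N$ yields $h$.

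The main obstacle is justifying the structural claim about $\C_1$ rigorously in the presence of the \halchiq constructs: value restrictions $\ALL{R}{A}$ and at-most restrictions $\ATMOST{1}{R}{A}$ could a~priori identify anonymous witnesses with one another or with named individuals, or propagate concept memberships across distinct subtrees. I would argue this away using three facts, stated carefully: (i) $\A$ is complete, so all constraints among named individuals are already satisfied and no existential at a named individual requires a named witness; (ii) the generating \dlliter-roles $P_{\mathit{new}}$ created by $\normexists$ occur only in their own defining axioms, so no functionality or value-restriction axiom ever relates elements lying in different $P_{\mathit{new}}$-subtrees, nor a subtree to the named part; and (iii) the passage from $\T_3$ to $\T_r$ retains exactly the \dlliter axioms needed to regenerate each subtree. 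Once the shape of $\C_1$ is fixed, the remainder is bookkeeping on top of the already-established single-assertion case (Lemma~\ref{lem:rew-anon-part}).
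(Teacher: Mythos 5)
Your overall strategy coincides with the paper's: the embedding of $\C_{\tup{\T_r,\A}}$ into $\C_{\tup{\T_3,\A}}$ via $\T_3\models\T_r$, the identity on $\ind(\A)$ justified by completeness, and, crucially, the observation that completeness together with the $\normand$-names guarantees that every existential $\SOMET{P}$ (for a fresh generating \dlliter-role $P$) triggered at a named individual $a$ under $\T_3$ is also triggered under $\T_r$, because the conjunction name is recorded in $\A$ and the corresponding \dlliter CI survives into $\T_r$. The paper phrases exactly this as proving $\tup{\T_r,\A}\models\SOMET{P}(a)$ for each successor $a\,w_P$ of $a$ in $\C_{\tup{\T_3,\A}}$, and then defers to (the second half of the proof of) Lemma~\ref{lem:rew-anon-part} for the deeper levels.

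However, one step of your argument is stated too strongly and the gluing relies on it: the claim that the subtree of $\C_{\tup{\T_3,\A}}$ below $a\,w_P$ is \emph{exactly} the anonymous part of $\C_{\tup{\T_3,\{N(a)\}}}$ for the triggering concept $N$. Concept names holding at $a$ in $\A$ but not entailed by $N$ can leak into that subtree through value restrictions: take $\T=\{A \ISA \SOME{R}{B},\ D \ISA \ALL{R}{E}\}$ and a complete $\A$ containing $A(a)$ and $D(a)$; the fresh role $P$ introduced by $\normexists$ for $A \ISA \SOME{R}{B}$ yields a successor $a\,w_P$ that satisfies $E$ in $\C_{\tup{\T_3,\A}}$ (via $P\ISA R$ and $D\ISA\ALL{R}{E}$), whereas the $P$-successor in $\C_{\tup{\T_3,\{A(a)\}}}$ does not. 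So the homomorphisms you import from Lemma~\ref{lem:rew-anon-part} on singleton ABoxes do not account for these propagated memberships, and your ``main obstacle'' discussion (points (i)--(iii)) addresses identification of witnesses but not this propagation. The repair is either to argue as the paper does --- establish only the existence of the successor $a\,v_P$ in $\C_{\tup{\T_r,\A}}$ and then rerun the inductive homomorphism construction of Lemma~\ref{lem:rew-anon-part} directly on the pair $(\C_{\tup{\T_3,\A}},\C_{\tup{\T_r,\A}})$ --- or to work with the $\normand$-name of the \emph{entire} $\sig(\T)$-type of $a$ (this conjunction occurs in $\T_2$ thanks to the Step-1 saturation, and its name is in $\A$ by completeness), for which the propagated concepts are already absorbed into the $\SOMET{P^-}\ISA A'$ axioms of the associated maximal fresh role. (Incidentally, for the non-maximal fresh roles this discrepancy also puts pressure on the literal full homomorphic-equivalence statement; what Theorem~\ref{thm:obda-rew-insep} actually uses is equivalence relative to $\sig(\T)$, where the enriched witness may be mapped to the maximal sibling witness.)
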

\begin{proof}
  We use the same assumptions for the canonical model as in the proof
  of Lemma~\ref{lem:rew-anon-part}.
  Denote by $\C_1$ the canonical model of $\tup{\T_3,\A}$, and by $\C_2$ the
  canonical model of $\tup{\T_r, \A}$ for a complete ABox $\A$.  The existence
  of a homomorphism from $\C_2$ to $\C_1$ is straightforward. We show that
  there exists a homomorphism from $\C_1$ to $\C_2$.
  Let $a \in \ind(\A)$, it is sufficient to show that for each successor $a
  w_P$ of $a$ in $\C_1$, there is a successor $a v_P$ of $a$ in $\C_2$.  The
  rest of the proof follows from Lemma~\ref{lem:rew-anon-part}.

  Let $\sigma$ be a successor of $a$ in $\C_1$. It follows from the proof of
  Lemma~\ref{lem:rew-anon-part} that $\sigma$ is of the form $a w_{P}$ where
  $P$ is a generating \dlliter role and $\tup{\T_3, \A} \models \SOMET{P}(a)$
  (moreover, there exists no individual $b \in \ind(\A)$ such that $R(a,b) \in
  \A$ for each role $R$ with $\T_3 \models P \ISA R$ and $B(b) \in \A$ for each
  concept name $B$ with $\T_3 \models \SOMET{P^-} \ISA B$).

  We show that $\tup{\T_r, \A} \models \SOMET{P}(a)$.
  Recall that $\T_3$ contains all possible CIs with $\SOMET{P}$ on the
  right-hand side.  From $\tup{\T_3, \A} \models \SOMET{P}(a)$, it follows that
  there exists a CI $A_1 \AND \cdots \AND A_n \ISA \SOMET{P}$ in $\T_3$, $n
  \geq 1$, such that $A_i(a) \in \A$. If $n=1$, then $\T_r \models A_1 \ISA
  \SOMET{P}$, hence $\tup{\T_r, \A} \models \SOMET{P}(a)$. Assume that $n > 1$,
  then by step~3 $\T_3$ contains $A_{A_1 \AND \cdots \AND A_n} \EQU A_1 \AND
  \cdots \AND A_n$, therefore $\T_r$ contains $A_{A_1 \AND \cdots \AND A_n}
  \ISA \SOMET{P}$, and since $\A$ is closed with respect to $\T_3$, $\A$
  contains $A_{A_1 \AND \cdots \AND A_n}(a)$. Finally, we obtain that $\T_r
  \models \SOMET{P}(a)$.

  Now, as no individual $b$ can be used as a $P$-successor of $a$, in $\C_2$
  there is a successor $a v_P$ of $a$.
\end{proof}
The result above is significant, because the mapping $\M_c=\compile(\T,\M)$ is
such that it generates from a database instance a virtual ABox that is complete
(within $\compile(\T,\M)$).  Finally, combined with Lemmas~\ref{lem:et-mapping}
and~\ref{lem:rew-h-complete-aboxes}, we are ready to prove
Theorem~\ref{thm:obda-rew-insep}.

\begin{theoremnum}{\ref{thm:obda-rew-insep}}
  Let $\tup{\T,\M,\S}$ be an OBDA specification, $\Sigma=\sig(\T)$, and
  $\tup{\T_r,\M_c}=\rewcomp(\T,\M)$.  Then, for each database instance $\D$ of
  $\S$ and for each $\Sigma$-query $q$, we have that $\cert(q, \tup{\T,\M,\S},
  \D) = \cert(q, \tup{\T_r,\M_c,\S}, \D)$.
\end{theoremnum}
\begin{proof}
  \textbf{(a)} Let $\D$ be an instance of $\S$ inconsistent with
  $\tup{\T,\M,\S}$. By Lemma~\ref{lem:et-mapping} $\A_{\M_c,\D} =
  \mathsf{EABox}(\tup{\T_3, \A_{\M,\D}})$, and since $\D$ is inconsistent with
  $\tup{\T_3,\M,\S}$, we have that $\A_{\M_c,\D}$ contains all possible facts
  over $\sig(\T)$. The axioms in $\T_r$ do not add more facts.

  \textbf{(b)} We show that for each database instance $\D$ of $\S$ consistent
  with $\tup{\T,\M,\S}$, the canonical models $\C_{\tup{\T,\A_{\M,\D}}}$ and
  $\C_{\tup{\T_r,\A_{\M_c,\D}}}$ are $\Sigma$-homomorphically equivalent.

  \smallskip%
  (I) We observe that by Lemma~\ref{lem:et-mapping}, it follows that
  $\A_{\M_c,\D}$ is closed with respect to $\T_3$.

  \smallskip%
  (II) We show that $\C_{\tup{\T, \A}}$ is $\Sigma$-homomorphically
  equivalent to $\C_{\tup{\T_3, \A}}$, where $\A$ is an ABox and
  $\Sigma = \sig(\T)$. The interesting direction is the existence of a
  $\Sigma$-homomorphism from $\C_{\tup{\T_3, \A}}$ to $\C_{\tup{\T,
      \A}}$. Since $\T_3$ is a model-conservative extension of $\T$,
  $\C_{\tup{\T, \A}}$ can be extended without changing the
  interpretations of symbols in $\Sigma$ to a model $\I$ of
  $\tup{\T_3, \A}$. By definition of canonical model, there exists a
  homomorphism from $\C_{\tup{\T_3, \A}}$ to $\I$ and since $\I$
  agrees with $\C_{\tup{\T,\A}}$ on $\Sigma$, we obtain that there
  exists a $\Sigma$-homomorphism from $\C_{\tup{\T_3, \A}}$ to
  $\C_{\tup{\T, \A}}$.

  \smallskip%
  (III) We show that $\mathsf{EABox}_\Sigma(\T,\A_{\M_c,\D}) =
  \mathsf{EABox}_\Sigma(\T,\A_{\M,\D})$, where $\Sigma = \sig(\T)$ and
  $\mathsf{EABox}_\Sigma(\T,\A)$ is the projection of $\mathsf{EABox}(\T,\A)$
  on $\Sigma$. Assume that $A(a) \in \mathsf{EABox}(\T_3, \A_{\M_c,D})$ such
  that $A(a) \notin \mathsf{EABox}(\T, \A_{\M,\D})$. Then $A$ is a fresh
  concept introduced in step~3, hence $A \notin \Sigma$ and
  $\mathsf{EABox}_\Sigma(\T_3,\A_{\M_c,\D}) =
  \mathsf{EABox}_\Sigma(\T,\A_{\M,\D})$. Combining it with (II), we conclude
  that $\mathsf{EABox}_\Sigma(\T,\A_{\M_c,\D}) =
  \mathsf{EABox}_\Sigma(\T,\A_{\M,\D})$.

  \smallskip%
  Now, by Lemma~\ref{lem:rew-h-complete-aboxes} and (I), by (II) and
  by (III) we obtain the following \mbox{($\Sigma$-)}homomorphic equivalences
  $\equiv$ ($\equiv_\Sigma$):
  \[
  \C_{\tup{\T_r,\A_{\M_c,\D}}} \equiv \C_{\tup{\T_3, \A_{\M_c,\D}}}
  \equiv_\Sigma \C_{\tup{\T, \A_{\M_c,\D}}} \equiv_\Sigma \C_{\tup{\T,
      \A_{\M,\D}}}.
  \]
  Hence, $\C_{\tup{\T_r,\A_{\M_c,\D}}}$ and $\C_{\tup{\T,
      \A_{\M,\D}}}$ are $\Sigma$-homomorphically equivalent.

  Finally, by Lemma~\ref{lem:inseparability-characterization} and \textbf{(a)},
  \textbf{(b)}, we conclude that $\tup{\T,\M,\S}$ and $\tup{\T_r,\M_c,\S}$ are
  $\Sigma$-CQ inseparable.
\end{proof}

\subsection{Proofs of Section~\ref{sec:approximation}}

\begin{theoremnum}{\ref{thm:cq-rewritability}}
  Let $\tup{\T,\M,\S}$ be an OBDA specification such that $\T$ is a \halchiq
  TBox. Further, let $\T_r=\rew(\T)$ and $\M'=\cut^\Omega_k(\compile(\T,\M))$,
  for a boun\-ded\-ness oracle $\Omega$ and some $k>0$.
  If $\T$ is FO-rewritable for AQs, then $\tup{\T,\M,\S}$ is
  \emph{CQ-rewritable} into \dlliter, and $\tup{\T_r,\M',\S}$ is its
  \emph{CQ-rewriting}.  Otherwise, $\tup{\T_r,\M',\S}$ is a \emph{sound
   CQ-approximation} of $\tup{\T,\M,\S}$ in \dlliter.
\end{theoremnum}
\begin{proof}
  We note that since $\T$ is a TBox of depth 1 (it is assumed to be in normal
  form), if $\T$ is FO-rewritable for AQs then by \cite[Lemma~5]{LuWo11} $\T$
  is FO-rewritable for CQs.
  %
\end{proof}

\begin{theoremnum}{\ref{thm:rewritability-fo-mapping-undecidable}}
  The problem of checking whether an OBDA specification with an \el ontology
  and FO source queries in the mapping is CQ-rewritable into \dlliter is
  undecidable.
\end{theoremnum}

\begin{proof}
  Proof by reduction from the satisfiability problem of first-order logic.

  Let $\varphi$ be a closed first-order formula. We construct an OBDA
  specification $\P = \tup{\T,\M,\S}$ such that $\P$ is CQ-rewritable into
  \dlliter iff $\varphi$ is unsatisfiable.

  We set $\T = \{\SOME{R}{A} \ISA A\}$. $\S$ contains all predicates in
  $\varphi$, a binary relation $tableR$ and a unary relation $tableA$ such that
  $tableR, tableA$ do not occur in $\varphi$. $\M$ consists of two mapping
  assertions: $tableR(x,y) \wedge \varphi \rightsquigarrow R(x,y)$ and
  $tableA(x) \rightsquigarrow A(x)$.

  Assume that $\varphi$ is unsatisfiable. Then for each database instance $\D$
  of $\S$ we have that $R$ is empty in $\A_{\M,\D}$. It is straightforward to
  see that $\tup{\emptyset,\M,\S}$ is a CQ-rewriting of $\P$ into \dlliter.

  Assume that $\varphi$ is satisfiable and, for the sake of contradiction,
  suppose that $\P$ is CQ-rewritable into \dlliter and $\P' = \tup{\T', \M', \S}$
  is such a CQ-rewriting where $\T'$ is a \dlliter TBox. Now, consider an instance
  of the reachability problem $G = (V,E)$ and two vertices $s,t\in V$. Let $\D$
  be a database instance that satisfies $\varphi$ and such that for each $(v,u)
  \in E$, $(v,u) \in tableR$ and $s \in tableA$. It is the standard reduction
  of the reachability problem to query answering in \el, therefore $t \in
  \cert(A(x), \tup{\P,\D})$ iff $t$ is reachable from $s$ in $G$. Since $\P'$
  is a rewriting of $\P $
  and $\T'$ is a \dlliter TBox, there exists a FO-query $q_A(x)$ such that
  $\cert(A(x), \tup{\P,\D}) = \cert(A(x), \tup{\P',\D}) = \ans(q_A(x),
  \D)$. Thus, we obtain that $t \in \ans(q_A(x), \D)$ iff $t$ is reachable from
  $s$ in $G$. It means that we can solve the reachability problem by evaluating
  a FO-query over the database encoding the graph, which contradicts the
  \textsc{NLogSpace}-hardness of the reachability problem. Contradiction rises
  from the assumption that $\P$ is CQ-rewritable into \dlliter.
\end{proof}

\begin{theoremnum}{\ref{thm:rewritability-ucq-mapping-decidable}}
  The problem of checking whether an OBDA specification with a
  Horn-$\mathcal{ALCHI}$ ontology of depth one and unions of CQs as source
  queries in the mapping is CQ-rewritable into \dlliter is decidable.
\end{theoremnum}

\begin{proof}
  Let $\P = \tup{\T,\M,\S}$ be an OBDA specification (here we do not split $\M$
  into high- and low-level mappings). We construct a monadic Datalog program
  $\Pi$ without inequalities worst case exponential in the size of $\T$ and
  $\M$ such that
  \begin{equation}
    \label{eq:bounded-rewritable}
    \text{$\Pi$ is program bounded iff $\P$ is CQ-rewritable into \dlliter},
  \end{equation}
  where $\Pi$ is said to be program bounded if each predicate $N$ mentioned in
  $\Pi$ is bounded in $\Pi$, and a Datalog program $\Pi$ is \emph{monadic} if
  all its IDB predicates are monadic (unary). It is known that program
  boundedness of monadic Datalog programs without inequalities in decidable in
  3\textsc{ExpTime} \cite{CGKV88}. Thus, we obtain a 4\textsc{ExpTime}
  algorithm for deciding CQ-rewritability into \dlliter.


  Let $\T_3$ be the TBox obtained as an intermediate result in Step~3 of
  $\rewcomp(\T,\M)$.
  Then $\Pi$ is the monadic Datalog program such that
  \begin{align}\label{eq:pi-property}
    &A_{\Pi}^\infty(\D) = A_{\Pi_{\T_3,\M}}^\infty(\D),\\
    &\notag\qquad\text{for each instance $\D$ of $\S$ and each concept $A$ in $\T_3$,}
  \end{align}
  and $\varphi^A$ is DB-defined for each $\varphi^A \in \Phi_\Pi(A)$.  Observe
  that the Datalog translation $\Pi_{\T_3,\M}$ of the Horn-$\mathcal{ALCHI}$
  TBox $\T_3$ is a Datalog program without inequalities. Therefore, we have
  that $\Pi$ is a monadic Datalog program without inequalities and its program
  boundedness is decidable. Moreover, observe that $\Phi_{\Pi_1}(P)$ is a
  finite union of CQs for each role name $P$ in $\T_3$.
  We first prove \eqref{eq:bounded-rewritable}, then we show how $\Pi$ is
  constructed.

  Assume that $\Pi$ is program bounded and let $\Omega$ be a boundedness oracle
  for it. Note that since $\Pi$ is bounded, for a concept name $A$,
  $\cut_{k}^\Omega(A, \Pi)$ does not depend on the value of $k$. Next, let
  $\T_r = \rew(\T)$, and $\M_c$ be the set of
  \begin{compactitem}
  \item mapping assertions $\varphi^A(x) \rightsquigarrow A(x)$ such that $A$
    is a concept name in~$\T_3$ and $\varphi^A\in\cut_{k}^\Omega(A, \Pi)$, and of
  \item mapping assertions $\varphi^P(x,y) \rightsquigarrow P(x,y)$ such that
    $P$ is a role name in~$\T_3$ and $\varphi^P\in\Phi_{\Pi_{\T_3,\M}}(P)$.
  \end{compactitem}
  It is straightforward to see that $\tup{\T_r, \M_c, \S}$ is a CQ-rewriting of
  $\tup{\T,\M,\S}$ into \dlliter.

  Assume that $\Pi$ is CQ-rewritable into \dlliter and $\tup{\T', \M', \S}$ is
  its CQ-rewriting where the source queries in $\M'$ are unions of CQs. Let $N$
  be a concept or role name in $\T$ and denote by $q_N(\vec{x})$ the rewriting
  of the query $N(\vec{x})$ into a union of CQs over $\S$ with respect to
  $\tup{\T', \M', \S}$ (recall that the rewriting of $N(\vec{x})$ with respect
  to $\T'$ is a union of CQs, and since $\M'$ contains unions of CQs as source
  queries, $q_N(\vec{x})$ is also a union of CQs).

  We construct now a Datalog program $\Pi'$ consisting of the rules $N(\vec{x})
  \leftarrow \varphi^N(\vec{x})$, for a concept or role name $N$ in $\T$ and a
  CQ $\varphi^N(\vec{x}) \in q_N(\vec{x})$. Obviously, $\Pi'$ is program
  bounded. Since $\tup{\T', \M', \S}$ is a CQ-rewriting of $\tup{\T,\M,\S}$,
  and $\T_3$ is a model-conservative extension of $\T$, we have that
  $N_{\Pi'}^\infty(\D) = N_{\Pi_{\T,\M}}^\infty(\D) =
  N_{\Pi_{\T_3,\M}}^\infty(\D)$, for each instance $\D$ of $\S$ and each
  concept or role name $N$ in $\T$. Next, because of \eqref{eq:pi-property}, we
  have that $A_{\Pi'}^\infty(\D) = A_{\Pi}^\infty(\D)$ for each instance $\D$
  of $\S$ and each concept name $A$ in $\T$. Now, we set the finite union of
  CQs $\Omega_\Pi(A)$ for each concept name $A$ in $\T_3$:
  \begin{compactitem}
  \item if $A$ is a concept name in $\T$, then $\Omega_\Pi(A) =
    \Phi_{\Pi'}(A)$
  \item otherwise, $A$ is introduced for a concept conjunction $A_1 \AND \cdots
    \AND A_n$ in Step~3, then $\Omega_\Pi(A)$ is the DNF of the formula
    $\Phi_{\Pi'}(A_1) \land \cdots \land \Phi_{\Pi'}(A_n)$ where each
    $\Phi_{\Pi'}$ is viewed as a formula in DNF.
  \end{compactitem}
  Hence, we obtain that $\Pi$ is program bounded.

  \medskip%
  We now show how $\Pi$ is constructed from $\Pi_{\T_3,\M}$.

  First, we remove from $\Pi_{\T_3,\M}$ the rules which are not reachable from
  the database predicates. Namely, let $\Pi_1$ be the set of rules $\pi
  =\mathit{head} \leftarrow X_1, \dots, X_n$ in $\Pi_{\T_3,\M}$ such that there
  are sets of rules $\rho_1, \dots, \rho_m$ in $\Pi_{\T_3,\M}$ such that
  $\rho_m$ is a set of rules from $\Pi_\M$, the predicates in the bodies of the
  rules in $\rho_{i-1}$ are exactly the predicates in the heads of the rules in
  $\rho_i$, for $2 \leq i\leq m$, and $\rho_1 = \{\pi\}$. It should be clear
  that $N_{\Pi_1}^\infty(\D) = N_{\Pi_{\T_3,\M}}^\infty(\D)$, for each instance
  $\D$ of $\S$ and each concept or role name $N$ in $\T_3$.

  Then $\Pi$ is the monadic Datalog program such that for each instance $\D$ of
  $\S$ and each concept name $A$ in $\T_3$, $A_{\Pi}^\infty(\D) =
  A_{\Pi_1}^\infty(\D)$. We obtain $\Pi$ by substituting each occurrence in the
  body of a rule of an atom of the form $R(x,y)$, for $R$ a role in $\T$, by
  $\Phi_{\Pi_1}(R)$, and by removing all rules whose head predicates are roles.
  Namely for the former, let $\rho = \mathit{head} \leftarrow \varphi, R(x,y)$
  be a rule in $\Pi_1$. Then we replace $\rho$ with the rules, $\mathit{head}
  \leftarrow \varphi, \psi$, for each CQ $\psi \in \Phi_{\Pi_1}(R)$. We repeat
  this procedure until we get a Datalog program $\Pi$ where no atom of the form
  $R(x,y)$, for a role $R$ in $\T_3$, occurs in the a body of a rule. Observe
  that $\Phi_{\Pi_1}(R)$ is always finite. 

  It is easy to see that $\Pi$ is as required.
\end{proof}


\fi

\end{document}
